\newcommand{\Dn}{\mathscr{D}_{n}}
\newcommand{\bX}{\textbf{X}}
\newcommand{\bZ}{\textbf{Z}}
\newcommand{\bTheta}{\boldsymbol{\Theta}}
\newcommand{\bC}{\mathcal{H}}
\newcommand{\bx}{\textbf{x}}
\renewcommand{\P}{\mathds{P}}
\newcommand{\R}{\mathds{R}}
\newcommand{\E}{\mathbb{E}}
\newcommand{\V}{\mathbb{V}}
\newcommand{\VI}{\mathrm{I}^{(j)}}
\newcommand{\VIhat}{\mathrm{I}_n^{(j)}}
\newcommand{\indep}{\perp \!\!\! \perp}
\theoremstyle{plain}
\newtheorem{theorem}{Theorem}
\newtheorem{proposition}{Proposition}
\newtheorem{assumption}{Assumption}
\newtheorem{specification}{Specification}
\newtheorem{definition}{Definition}
\title{\textbf{\LARGE Variable importance for causal forests: breaking down the heterogeneity of treatment effects}}
\author[1]{Clément Bénard}
\affil[1]{Safran Tech, Digital Sciences \& Technologies, 78114 Magny-Les-Hameaux, France}
\author[2]{Julie Josse}
\affil[2]{PreMeDICaL project team, INRIA Sophia-Antipolis, Montpellier, France}
\date{}
\begin{document}

\maketitle

\begin{abstract}
    Causal random forests provide efficient estimates of heterogeneous treatment effects. However, forest algorithms are also well-known for their black-box nature, and therefore, do not characterize how input variables are involved in treatment effect heterogeneity, which is a strong practical limitation. In this article, we develop a new importance variable algorithm for causal forests, to quantify the impact of each input on the heterogeneity of treatment effects.
    The proposed approach is inspired from the drop and relearn principle, widely used for regression problems. Importantly, we show how to handle the forest retrain without a confounding variable. If the confounder is not involved in the treatment effect heterogeneity, the local centering step enforces consistency of the importance measure. Otherwise, when a confounder also impacts heterogeneity, we introduce a corrective term in the retrained causal forest to recover consistency. Additionally, experiments on simulated, semi-synthetic, and real data show the good performance of our importance measure, which outperforms competitors on several test cases. Experiments also show that our approach can be efficiently extended to groups of variables, providing key insights in practice.
\end{abstract}

\sloppy

\section{Introduction}

\subsection{Context and Objectives}

Estimating heterogeneous treatment effects has recently attracted a great deal of interest in the machine learning community, particularly for medical applications \citep{obermeyer2016predicting} and in the social sciences. 
Over the past few years, numerous efficient algorithms have been developed to estimate such effects, including double robust methods \citep{kennedy2020optimal}, R-learners \citep{nie2021quasi}, X-learners \citep{kunzel2019metalearners}, causal forests \citep{wager2018estimation, athey2019generalized}, the lasso \citep{imai2013lasso}, BART \citep{hill2011bayesian}, or neural networks \citep{shalit2017estimating}.  
However, most of these methods remain black boxes, and it is therefore difficult to grasp how input variables impact treatment effects.  This understanding is crucial for optimizing treatment policies, for instance.
 While the accuracy of treatment effect estimates has significantly improved recently, little effort has been dedicated to improve their interpretability, and  quantifying the impact of variables involved in treatment effect heterogeneity. In this regard, we can mention the importance measure of the causal forest package
\texttt{grf} \citep{grf}, the double robust approach of \citep{hines2022variable}, and the algorithm from \citep{boileau2022flexible} for high dimensional linear cases. Besides, let us also mention policy learning, which aims at selecting relevant individuals to treat \citep{zhao2012estimating, swaminathan2015batch, kitagawa2018should, athey2021policy}. However, these policy procedures are also black boxes, which limits their practical use.
The main purpose of this article is to introduce a variable importance measure for heterogeneous treatment effects, improving over the existing algorithms, to better identify the sources of heterogeneity. We focus on causal random forests, defined as a specific case of generalized forests \citep{athey2019generalized}, and well-known to be one of most accurate algorithm to estimate heterogeneous treatment effects.

\paragraph{Contributions.}
Our main contribution is thus the introduction of a variable importance algorithm for causal random forests, following the drop and retrain principle, which is well-established for regression problems \citep{lei2018distribution, williamson2020unified, hooker2021unrestricted, benard2022mean}. The main idea is to retrain the learning algorithm without a given input variable, and measure the drop of accuracy to get its importance. In particular, such approach ensures that irrelevant variables get a null importance asymptotically.
In the context of causal inference, the main obstacle is to retrain the causal forest without a confounding variable, since the unconfoundedness assumption can be violated, leading to inconsistent forest estimates and biased importance values, as explained in Section \ref{sec:vimp}. However, we will see that the local centering of the outcome and treatment assignment leads to consistent estimates, provided that the removed variable is not involved in the treatment effect heterogeneity. Otherwise, to handle a confounder involved in heterogeneity, we introduce a corrective term in the retrained causal forest. Overall, we will show in Section \ref{sec:theory}, that our proposed variable importance algorithm is consistent, under standard assumptions in the literature about the theoretical analysis of random forests. Next, in Section \ref{sec:xp}, we run several batches of experiments on simulated, semi-synthetic, and real data to show the good performance of the introduced method compared to the existing competitors. Additionally, we take advantage of the experimental section to illustrate that the extension of our approach to group of variables is straightforward and provides powerful insights in practice.
The remaining of this first section is dedicated to the mathematical formalization of the problem.

\subsection{Definitions}
To define heterogeneous treatment effects, we first introduce a standard causal setting with an input vector $\bX = (X^{(1)}, \hdots, X^{(p)}) \in \R^p$ with $p \in \mathbb{N}^{\star}$, the binary treatment assignment $W \in \{0, 1\}$, the potential outcome $Y(1) \in \mathbb{R}$ for the subject receiving the treatment, and the potential outcome without treatment $Y(0)  \in \mathbb{R}$. We denote by $\smash{\bX^{(\bC)}}$  the subvector with only the components in $\bC \subset \{1,\hdots,p\}$, and $\bX^{(-j)}$ the vector $\bX$ with the $j$-th component removed. The observed outcome is given by $Y = W Y(1) + (1 - W) Y(0)$, which is known as the SUTVA assumption in the literature. More precisely, the potential outcomes are defined by
\begin{align*}
    &Y(0) = \mu(\bX) + \varepsilon(0), \\
    &Y(1) = \mu(\bX) + \tau(\bX^{(\bC)}) + \varepsilon(1),    
\end{align*}
where $\mu(\bX)$ is a baseline function, $\tau(\bX^{(\bC)})$ is the conditional average treatment effect (CATE) only depending on variables in $\bC  \subset \{1,\hdots,p\}$, and $\varepsilon(0), \varepsilon(1)$ are some noise variables satisfying $\E[\varepsilon(0) \mid \bX] = \E[\varepsilon(1) \mid \bX] = 0$. 
Notice that the CATE is also defined as the mean difference between potential outcomes, conditional on $\bX$, i.e., $\E[Y(1) - Y(0) \mid \bX] = \tau(\bX^{(\bC)})$, by construction. Overall, the observed outcome $Y$ also writes
\begin{align*}
    Y = \mu(\bX) + \tau(\bX^{(\bC)}) \times W + \varepsilon(W).
\end{align*}

The cornerstone of causal treatment effect identifiability  is the assumption of unconfoundedness given below, which states that all confounding variables are observed in the data. By definition, the responses $Y(0)$, $Y(1)$, and the treatment assignment $W$ simultaneously depend on the confounding variables. If all confounding variables are observed, then the responses and the treatment assignment are independent conditional on the inputs. Consequently, the treatment effect is identifiable, as stated in the following proposition---all proofs of propositions and theorems stated throughout the article are gathered in Appendix \ref{App:proofs}. Notice that Assumption \ref{A:unconfound} below enforces that the input vector $\bX$ contains all confounding variables, but $\bX$ may also contain non-confounding variables. Consequently, $\bX^{(\bC)}$ can also be a mix of confounding and non-confounding variables, or contain only variables of one type. Ideally, all variables impacting the treatment effect heterogeneity should be involved in the analysis, even if they are not confounding variables, to better estimate and interpret the treatment effect.
\begin{assumption} \label{A:unconfound}
    Potential outcomes are independent of the treatment assignment conditional on the observed input variables, i.e., $Y(0), Y(1) \indep W \mid \bX$.
\end{assumption}
\begin{proposition} \label{prop_identifiability}
    If the unconfoundedness Assumption \ref{A:unconfound} is satisfied, then we have
    \begin{align*}
        \tau(\bX^{(\bC)}) = \E[Y \mid \bX, W = 1] - \E[Y \mid \bX, W = 0].
    \end{align*}
\end{proposition}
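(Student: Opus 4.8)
The plan is to evaluate the two conditional expectations on the right-hand side separately, one for each treatment arm, and to reduce each of them to a conditional expectation of a potential outcome given $\bX$ alone. The argument rests on exactly two ingredients already available: the SUTVA relation $Y = W Y(1) + (1-W) Y(0)$, which ties the observed outcome to the potential outcomes, and the unconfoundedness Assumption \ref{A:unconfound}. No additional machinery is needed.

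First I would treat the treated arm. On the event $\{W = 1\}$ the SUTVA relation gives $Y = Y(1)$ pointwise, so conditioning on $\{\bX, W = 1\}$ one may replace $Y$ by $Y(1)$, yielding $\E[Y \mid \bX, W = 1] = \E[Y(1) \mid \bX, W = 1]$. Then the conditional independence $Y(1) \indep W \mid \bX$ supplied by Assumption \ref{A:unconfound} allows dropping $W$ from the conditioning set, so that $\E[Y(1) \mid \bX, W = 1] = \E[Y(1) \mid \bX]$. The identical reasoning on the control arm $\{W = 0\}$, where $Y = Y(0)$, gives $\E[Y \mid \bX, W = 0] = \E[Y(0) \mid \bX]$.

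Subtracting the two expressions and invoking the definition of the CATE as the conditional mean difference of the potential outcomes, namely $\E[Y(1) - Y(0) \mid \bX] = \tau(\bX^{(\bC)})$, would close the argument:
\begin{align*}
\E[Y \mid \bX, W = 1] - \E[Y \mid \bX, W = 0] = \E[Y(1) \mid \bX] - \E[Y(0) \mid \bX] = \tau(\bX^{(\bC)}).
\end{align*}
There is no genuine obstacle here, since this is the classical identification of the CATE under no unmeasured confounding. The only point to state carefully is the first replacement: conditioning on the event $\{W = w\}$ is what legitimately turns the observed $Y$ into the potential outcome $Y(w)$, and this use of SUTVA must precede the use of the conditional independence, which is what subsequently removes the residual dependence on $W$.
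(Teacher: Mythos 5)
Your argument is correct and is essentially the paper's own proof: both rest on using SUTVA to replace the observed $Y$ by the relevant potential outcome within each treatment arm, then invoking Assumption \ref{A:unconfound} to drop $W$ from the conditioning set, and finally taking the difference. The paper merely carries out the two arms simultaneously by keeping $W$ symbolic in $\E[Y \mid \bX, W] = \mu(\bX) + W\tau(\bX^{(\bC)})$ before specializing to $W=1$ and $W=0$, which is the same computation.
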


Note that we define above the treatment effect as the expected difference between potential outcomes, conditioned on input variables. However, the heterogeneity properties strongly depend on how we define the treatment effect \citep{vanderweele2007four, rothman2012epidemiology, colnet2023risk}. The ratio between the means of potential outcomes may also define a treatment effect, leading to potential heterogeneity while our original outcome difference remains constant.  A thorough discussion of this topic is out of scope of this article, and we take the difference of potential outcomes as treatment effect, the widely used metric for many applications \citep{vanderweele2007four}. We refer to \citet{colnet2023risk} for a comparison of treatment effect measures.

\citet{vanderweele2007four} defined treatment effect heterogeneity as follows.
\begin{definition}[\citet{vanderweele2007four}] \label{def:heterogeneous}
    The treatment effect $\tau$ is said to be heterogeneous with respect to $\bX$ if it exists $\bx, \bx' \in \R^p$ such that $\tau(\bx^{(\bC)}) \neq \tau(\bx'^{(\bC)})$.
\end{definition}
We strengthen this definition in two directions, formalized in Definition \ref{A:tau} below. First, we require $\tau$ to be heterogeneous with respect to each variable in $\bC$, to enforce $\bC$ to be the subset of variables impacting treatment effect heterogeneity. Secondly, notice that Definition \ref{def:heterogeneous} can be satisfied while having an homogeneous treatment effect in probability, i.e., $\P(\tau(\bX^{(\bC)}) = \tau(\bX'^{(\bC)})) = 1$, with $\bX'^{(\bC)}$ an independent copy of $\bX^{(\bC)}$.  In such cases, heterogeneity is not detectable from a data sample, and has a negligible impact in practice. Therefore, we enforce $\tau$ to take distinct values with respect to all variables in $\bC$ on sets of non-null Lebesgue measure.
\begin{definition} \label{Def:tau}
     The treatment effect $\tau$ is said to be heterogeneous with respect to all variables in $\bC$, if for all $j \in \bC$, it exists $\mathcal{X}_{p-1} \subset \R^{p-1}$ and $\mathcal{X}_1, \mathcal{X}'_1  \subset \R$, such that for all $\bx^{(-j)} \in \mathcal{X}_{p-1}$, $x^{(j)}  \in \mathcal{X}_1$, $x'^{(j)} \in \mathcal{X}'_1$, we have
     \begin{align*}
         \tau(\bx^{(\bC)}) \neq \tau(\bx'^{(\bC)}),
     \end{align*}
     with $\bx'^{(-j)} = \bx^{(-j)}$, and $\mathcal{X}_{p-1}$, $\mathcal{X}_1$, and $\mathcal{X}'_1$ have a non-null Lebesgue measure.
\end{definition}
In the sequel, we assume that the treatment effect $\tau$ is heterogeneous in the sense of Definition \ref{A:tau}, and that $\bX$ admits a strictly positive density, to enforce heterogeneity with a positive probability, as stated in the proposition below. Our objective is to quantify the influence of the input variables $\bX$ on the treatment heterogeneity  using an available sample $\Dn = \{(\bX_i, Y_i, W_i)\}_{i=1}^n$, made of $n \in \mathbb{N}^{\star}$ independent and identically distributed (iid) observations.
\begin{assumption} \label{A:tau}
    The treatment effect $\tau$ is heterogeneous according to Definition \ref{A:tau}, and $\bX$ admits a strictly positive density.
\end{assumption}
\begin{proposition} \label{prop:heterogeneous_prob}
    If Assumption \ref{A:tau} is satisfied, and $\bX'$ is an independent copy of $\bX$, then 
    \begin{align*}
        \P(\tau(\bX^{(\bC)}) \neq \tau(\bX'^{(\bC)})) > 0.
    \end{align*}
\end{proposition}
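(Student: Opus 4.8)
The plan is to argue by contraposition, since a direct construction is blocked by the independence of $\bX$ and $\bX'$. Definition~\ref{Def:tau} only produces a discrepancy of $\tau$ when the two arguments share their non-$j$ coordinates, but $\bX$ and $\bX'$ are independent with densities, so the event $\{\bX^{(-j)} = \bX'^{(-j)}\}$ has probability zero and cannot be conditioned on. I would therefore assume $\P(\tau(\bX^{(\bC)}) \neq \tau(\bX'^{(\bC)})) = 0$ and derive a contradiction with the heterogeneity of $\tau$.

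The first step is to show that this assumption forces $\tau(\bX^{(\bC)})$ to be almost surely constant. Writing $Z = \tau(\bX^{(\bC)})$ and $Z' = \tau(\bX'^{(\bC)})$, these are i.i.d.\ with common CDF $F$, and for every $a \in \R$ the event $\{Z \le a < Z'\}$ is contained in $\{Z \neq Z'\}$; by independence this gives $F(a)\big(1 - F(a)\big) \le \P(Z \neq Z') = 0$. Hence $F$ takes only the values $0$ and $1$, so by monotonicity and right-continuity there is a constant $c$ with $Z = c$ almost surely.

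The second step converts almost-sure constancy into constancy Lebesgue-almost-everywhere, and this is precisely where the strictly positive density is used. Since $\P(\tau(\bX^{(\bC)}) \neq c) = 0$ and the density $f$ of $\bX$ is strictly positive on $\R^p$, the set $N = \{\bx \in \R^p : \tau(\bx^{(\bC)}) \neq c\}$ must have zero Lebesgue measure, for otherwise $\int_N f$ would be strictly positive.

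The last step contradicts Definition~\ref{Def:tau}. Fixing any $j \in \bC$, I would take the positive-measure sets $\mathcal{X}_{p-1}, \mathcal{X}_1, \mathcal{X}'_1$ supplied by the definition. Applying Fubini's theorem to the null set $N$, for Lebesgue-almost-every $\bx^{(-j)} \in \mathcal{X}_{p-1}$ the function $\tau$ equals $c$ for almost every value of the $j$-th coordinate; since $\mathcal{X}_1$ and $\mathcal{X}'_1$ both have positive measure, one can then select $x^{(j)} \in \mathcal{X}_1$ and $x'^{(j)} \in \mathcal{X}'_1$ with $\tau(\bx^{(\bC)}) = c = \tau(\bx'^{(\bC)})$ and $\bx'^{(-j)} = \bx^{(-j)}$. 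This equality directly contradicts the strict inequality $\tau(\bx^{(\bC)}) \neq \tau(\bx'^{(\bC)})$ guaranteed by Definition~\ref{Def:tau}. I expect the main obstacle to be conceptual rather than computational: realizing that the independence of $\bX$ and $\bX'$ rules out the obvious coupling, and that the bridge from the measure-zero information carried by the heterogeneity definition to a positive-probability event must pass through almost-sure constancy together with strict positivity of the density.
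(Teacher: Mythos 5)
Your proof is correct, but it takes a genuinely different route from the paper's. The paper argues directly: it lower-bounds $\P(\tau(\bX^{(\bC)}) \neq \tau(\bX'^{(\bC)}))$ by the integral $\int_{\mathcal{X}_1 \times \mathcal{X}'_1 \times \mathcal{X}_{p-1}} f(x^{(j)}, \bx^{(-j)}) f(x'^{(j)}, \bx^{(-j)})\, dx^{(j)}\, dx'^{(j)}\, d\bx^{(-j)}$, which is positive because $f$ is strictly positive and the three sets have non-null Lebesgue measure. You instead argue by contraposition: $\P(Z \neq Z') = 0$ for i.i.d.\ $Z = \tau(\bX^{(\bC)})$, $Z' = \tau(\bX'^{(\bC)})$ forces their common law to be degenerate (via $F(a)(1-F(a)) = 0$ for all $a$), the strictly positive density upgrades almost-sure constancy to Lebesgue-a.e.\ constancy of $\tau$, and Fubini then exhibits points with $\bx'^{(-j)} = \bx^{(-j)} \in \mathcal{X}_{p-1}$, $x^{(j)} \in \mathcal{X}_1$, $x'^{(j)} \in \mathcal{X}'_1$ at which $\tau$ takes equal values, contradicting Definition~\ref{Def:tau}. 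Your opening observation is well taken and in fact pinpoints the delicate step in the paper's one-liner: Definition~\ref{Def:tau} only separates $\tau$-values at pairs sharing the same $\bx^{(-j)}$, and under independence the event $\{\bX^{(-j)} = \bX'^{(-j)}\}$ is null in $\R^{2p}$, so the displayed integral is a parametrization of a null sub-event rather than its probability; turning the direct bound into a rigorous one would need an additional ingredient such as continuity of $\tau$ to thicken that diagonal set. Your contrapositive argument sidesteps this entirely, at the cost of being purely qualitative (it yields positivity but no explicit lower bound), and, modulo the standing measurability of $\tau$, it is complete.
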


\section{Variable Importance for Heterogeneous Treatment Effects} \label{sec:vimp}

\subsection{Theoretical Definition}

To propose a variable importance measure, we build on \citet{sobol1993sensitivity} and \citet{williamson2020unified}, which define variable importance in the case of regression as the proportion of output explained variance lost when a given input variable is removed. \citet{hines2022variable} extend this idea to treatment effects, and introduce the theoretical importance measure $\VI$ of $X^{(j)}$, defined by
\begin{align} \label{eq:th_vimp}
    \VI = \frac{\V[\tau(\bX^{(\bC)})] - \V[\E[\tau(\bX^{(\bC)})|\bX^{(-j)}]]}{\V[\tau(\bX^{(\bC)})]} = \frac{\E[(\tau(\bX^{(\bC)}) - \E[\tau(\bX^{(\bC)})|\bX^{(-j)}])^2]}{\V[\tau(\bX^{(\bC)})]},
\end{align}
which is well-defined under Assumption \ref{A:tau}, since $\V[\tau(\bX^{(\bC)})] > 0$. Otherwise, when $\V[\tau(\bX^{(\bC)})] = 0$, the treatment is homogeneous, i.e. constant with respect to all input variables, and does not satisfy Definition \ref{Def:tau}.
This importance measure gives the proportion of treatment effect variance lost when a given input variable is removed. Additionally, the following proposition shows that $\VI$ properly identifies variables in $\bC$, which have an impact on treatment heterogeneity, where the proof in Appendix \ref{App:proofs} is a consequence of Assumption \ref{A:tau}.
\begin{proposition} \label{prop_VI}
    Let Assumption \ref{A:tau} be satisfied. If $j \notin \bC$, then we have $\VI = 0$.
    Otherwise, if $j \in \bC$, we have $0 < \VI \leq 1$. 
\end{proposition}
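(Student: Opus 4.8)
The plan is to establish four separate claims---nonnegativity, the upper bound $\VI \le 1$, vanishing when $j \notin \bC$, and strict positivity when $j \in \bC$---of which only the last is delicate. First I would record that the numerator of $\VI$ in its second form, namely $\E[(\tau(\bX^{(\bC)}) - \E[\tau(\bX^{(\bC)}) \mid \bX^{(-j)}])^2]$, is exactly the expected conditional variance $\E[\V[\tau(\bX^{(\bC)}) \mid \bX^{(-j)}]]$ and is therefore nonnegative; since $\V[\tau(\bX^{(\bC)})] > 0$ under Assumption \ref{A:tau}, the denominator is strictly positive and $\VI \ge 0$. The upper bound then follows from the law of total variance, which gives $\V[\tau(\bX^{(\bC)})] = \E[\V[\tau(\bX^{(\bC)}) \mid \bX^{(-j)}]] + \V[\E[\tau(\bX^{(\bC)}) \mid \bX^{(-j)}]]$, so the numerator is at most the denominator and hence $\VI \le 1$.

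For the case $j \notin \bC$, I would note that $\bC \subseteq \{1,\dots,p\} \setminus \{j\}$, so the subvector $\bX^{(\bC)}$ is a deterministic function of $\bX^{(-j)}$ alone. Consequently $\tau(\bX^{(\bC)})$ is $\sigma(\bX^{(-j)})$-measurable, so $\E[\tau(\bX^{(\bC)}) \mid \bX^{(-j)}] = \tau(\bX^{(\bC)})$ almost surely, the numerator vanishes, and $\VI = 0$.

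The core of the argument is the strict positivity when $j \in \bC$, which I would prove by contradiction. Suppose $\VI = 0$; then the numerator vanishes, forcing $\tau(\bX^{(\bC)}) = g(\bX^{(-j)})$ almost surely for the measurable function $g = \E[\tau(\bX^{(\bC)}) \mid \bX^{(-j)}]$. Because $\bX$ admits a strictly positive density (Assumption \ref{A:tau}), this $\P$-almost-sure identity upgrades to $\tau(\bx^{(\bC)}) = g(\bx^{(-j)})$ for Lebesgue-almost-every $\bx \in \R^p$. I would then bring in the heterogeneity structure of Definition \ref{Def:tau}: there exist sets $\mathcal{X}_{p-1}, \mathcal{X}_1, \mathcal{X}'_1$ of positive Lebesgue measure on which $\tau$ takes distinct values across the $j$-th coordinate. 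Applying Fubini's theorem to the positive-measure product sets associated with $\mathcal{X}_{p-1} \times \mathcal{X}_1$ and $\mathcal{X}_{p-1} \times \mathcal{X}'_1$, I would extract a single $\bx^{(-j)} \in \mathcal{X}_{p-1}$ together with values $x^{(j)} \in \mathcal{X}_1$ and $x'^{(j)} \in \mathcal{X}'_1$ for which both $\tau(\bx^{(\bC)}) = g(\bx^{(-j)})$ and $\tau(\bx'^{(\bC)}) = g(\bx^{(-j)})$ hold, with $\bx'^{(-j)} = \bx^{(-j)}$. This yields $\tau(\bx^{(\bC)}) = \tau(\bx'^{(\bC)})$, contradicting Definition \ref{Def:tau}; hence $\VI > 0$.

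The main obstacle is precisely this measure-theoretic bridge. The hypothesis of Definition \ref{Def:tau} is phrased in terms of Lebesgue measure on the coordinate subspaces, whereas the vanishing of the numerator is an almost-sure statement under the law of $\bX$. The strictly positive density is what lets me translate between the two notions of negligibility, and Fubini's theorem is what lets me align the exceptional null sets in the $\bx^{(-j)}$ and $x^{(j)}$ coordinates so that a single admissible triple $(\bx^{(-j)}, x^{(j)}, x'^{(j)})$ survives to produce the contradiction.
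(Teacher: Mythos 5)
Your proof is correct and follows essentially the same route as the paper: the law of total variance gives the two equivalent forms of $\VI$ and the bound $\VI \le 1$, measurability of $\tau(\bX^{(\bC)})$ with respect to $\bX^{(-j)}$ handles $j \notin \bC$, and strict positivity for $j \in \bC$ rests on the positive-Lebesgue-measure sets of Definition \ref{Def:tau} combined with the strictly positive density. The only difference is presentational---you argue by contradiction and make the Fubini/null-set bookkeeping explicit, where the paper asserts directly that $(\tau(\bX^{(\bC)}) - \E[\tau(\bX^{(\bC)})\mid\bX^{(-j)}])^2 > 0$ with positive probability---so your write-up is, if anything, slightly more careful on the measure-theoretic bridge.
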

Note that by definition of $\VI$, a variable strongly correlated to the other inputs, has a low importance value. This is due to the fact that, owing to this strong dependence, there is minimal loss of information regarding the treatment effect heterogeneity when such a variable is removed. As suggested by both \citet{williamson2020unified} and \citet{hines2022variable}, one possible approach involves extending the importance measure to a group of variables, where strongly dependent variables are grouped together. For the sake of clarity, we focus on the case of a single variable in the following sections. However, extending this approach to groups of variables is straightforward, and we will present such examples in the experimental section.

More importantly, \citet{hines2022variable} highlight that a key problem to estimate the above quantity $\VI$, is that the unconfoundedness Assumption~\ref{A:unconfound} does not imply unconfoundedness for the reduce set of input variables $\bX^{(-j)}$, i.e., we may have $Y(0), Y(1) \not \indep W \mid \bX^{(-j)}$. \citet{hines2022variable} overcome this issue using double robust approaches \citep{kennedy2020optimal, nie2021quasi} to estimate $\tau$ with all input variables in a first step, and then regress the obtained treatment effect on $\bX^{(-j)}$ to estimate $\E[\tau(\bX^{(\bC)})|\bX^{(-j)}]$. Actually, the generalized random forest framework from \citet{athey2019generalized} enables to get closer to the original proposal of \citet{williamson2020unified} by retraining the causal forest without variable $X^{(j)}$ and still get consistent estimates of $\E[\tau(\bX^{(\bC)})|\bX^{(-j)}]$, as we will see. Therefore, we focus on causal forests \citep{wager2018estimation, athey2019generalized}, one of the state-of-the-art algorithm to estimate heterogeneous treatment effects, to propose efficient estimates of $\VI$.

\subsection{Causal Random Forests}

Generalized random forests \citep{athey2019generalized} are a generic framework to build efficient estimates of quantities defined as solutions of local moment equations. As opposed to original Breiman's forests, generalized forests are not the average of tree outputs. Instead, trees are aggregated to generate weights for each observation of the training data, used in a second step to build a weighted estimate of the target quantity. Causal forests are a specific case of generalized forest, where the following local moment equation identifies the treatment effect under the unconfoundedness Assumption \ref{A:unconfound},
\begin{align} \label{eq:local_moment_full}
    \tau(\bX^{(\bC)}) \times \V[W \mid \bX] - \mathrm{Cov}[W, Y \mid \bX] = 0.
\end{align}
The local moment equation (\ref{eq:local_moment_full}) is thus used to define the causal forest estimate $\tau_{M,n}(\bx)$ at a new query point $\bx$, built from the data $\Dn$ with $M \in \mathbb{N}^{\star}$ trees, and formally defined in \citet[Section $6.1$]{athey2019generalized} by
\begin{align} \label{eq:causal_forest}
    \tau_{M,n}(\bx) = \frac{\sum_{i=1}^n \alpha_i(\bx) W_i Y_i - \overline{W}_{\alpha} \overline{Y}_{\alpha}}{\sum_{i=1}^n \alpha_i(\bx) (W_i - \overline{W}_{\alpha})^2},
\end{align}
where $\overline{Y}_{\alpha} = \sum_{i=1}^n \alpha_i(\bx) Y_i$, $\overline{W}_{\alpha} = \sum_{i=1}^n \alpha_i(\bx) W_i$, and the weights $\alpha_i(\bx)$ are generated by the forest to quantify the frequency of $\bx$ and the training observation $\bX_i$ both falling in the same terminal leaves of trees. Notice that the $\ell$-th tree of the forest is randomized by $\Theta_{\ell}$, which defines the resampling of the data prior to the tree growing, as well as the random variable selection at each node for the split optimization. We write the causal forest estimate $\tau_{M,n}(\bx, \bTheta_M)$ when it improves clarity, where $\bTheta_M = (\Theta_{1}, \hdots, \Theta_{M}$). Besides, notice that the local moment equation (\ref{eq:local_moment_full}) is also used to define an efficient splitting criterion of the tree nodes.

Finally, the causal forest algorithm first performs a local centering step in practice, by regressing $Y$ and $W$ on $\bX$ using regression forests, fit with $\Dn$. The obtained out-of-bag forest estimates of $m(\bX_i) = \E[Y_i \mid \bX_i]$ and $\pi(\bX_i) = \E[W_i \mid \bX_i]$ are denoted by $\smash{\hat{m}_n(\bX_i)}$ and $\smash{\hat{\pi}_n(\bX_i)}$. 
Then, these quantities are subtracted to get the centered outcome $\smash{\tilde{Y}_i = Y_i - \hat{m}_n(\bX_i)}$, and  centered treatment $\smash{\tilde{W}_i = W_i - \hat{\pi}_n(\bX_i)}$, used to fit the causal forest $\tau_{M,n}(\bx)$.

\subsection{Variable Importance Algorithm}

We take advantage of causal forests to build an estimate of our variable importance measure $\smash{\VI}$, defined in equation (\ref{eq:th_vimp}).
The forest estimate $\smash{\tau_{M,n}(\bx)}$, described in the previous subsection, provides a plug-in estimate for the first term $\smash{\tau(\bX^{(\bC)})}$  of $\smash{\VI}$.
Next, we need to estimate the second term $\smash{\E[\tau(\bX^{(\bC)})|\bX^{(-j)}]}$ involved in $\smash{\VI}$, and then, a Monte-Carlo method will provide an efficient algorithm for our importance measure.
Hence, a natural approach is to drop the $j$-th variable and retrain the forest to estimate $\smash{\E[\tau(\bX^{(\bC)})|\bX^{(-j)}]}$.  As we deepen below and summarize in Algorithm \ref{algo_VI}, a critical feature of this procedure is that all input variables  are used in the local centering of $Y_i$ and $W_i$, before the $j$-th variable is dropped to build $\smash{\tau_{M,n}^{(-j)}(\bx)}$.
Therefore, the causal forest is retrain using the observations $\smash{\{(\bX_i^{(-j)}, \tilde{Y}_i, \tilde{W}_i)\}_{i=1}^n}$ to generate new weights $\smash{\alpha'(\bx^{(-j)})}$ and build $\smash{\tau_{M,n}^{(-j)}(\bx)}$ through equation (\ref{eq:causal_forest}). 

\paragraph{Identifiability of treatment effect.} 
When a variable $X^{(j)}$ is removed from the input variables, the moment equation (\ref{eq:local_moment_full}) does not necessarely hold anymore, since unconfoundedness Assumption (\ref{A:unconfound}) may be violated with a reduced set of inputs. However, an important feature of causal forests is the preliminary step of local centering of the observed outcome and treatment assignment, explained above. The following proposition shows that the treatment effect is well identified by the local moment equation of causal forests including only variables in $\bC$, provided that the data is centered with all inputs. We recall that $m(\bX) = \E[Y \mid \bX]$ and $\pi(\bX) = \E[W \mid \bX]$.
\begin{proposition} \label{prop:local_moment_C}
    If Assumption \ref{A:unconfound} is satisfied, we have
    \begin{align*}
        \quad \tau(\bX^{(\bC)}) \times \V[W - \pi(\bX) \mid \bX^{(\bC)}] - \mathrm{Cov}[W - \pi(\bX), Y - m(\bX) \mid \bX^{(\bC)}] = 0,
    \end{align*} 
    which is the local moment equation defining causal forests, with input variables $\bX^{(\bC)}$, centered outcome $Y - m(\bX)$, and centered treatment assignment $W - \pi(\bX)$.
\end{proposition}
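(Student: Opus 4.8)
The plan is to work directly from the structural model $Y = \mu(\bX) + \tau(\bX^{(\bC)})W + \varepsilon(W)$ and reduce the claimed moment equation to the vanishing of a single cross-covariance term. First I would compute the regression function $m(\bX) = \E[Y \mid \bX]$ explicitly. Since $\mu(\bX)$ and $\tau(\bX^{(\bC)})$ are deterministic given $\bX$, this reduces to evaluating $\E[W \mid \bX] = \pi(\bX)$ and $\E[\varepsilon(W) \mid \bX]$. Writing $\varepsilon(W) = W\varepsilon(1) + (1-W)\varepsilon(0)$ and noting that $\varepsilon(0), \varepsilon(1)$ are deterministic functions of the potential outcomes given $\bX$, unconfoundedness (Assumption~\ref{A:unconfound}) lets me factor $\E[W\varepsilon(1) \mid \bX] = \pi(\bX)\,\E[\varepsilon(1)\mid\bX] = 0$, and likewise for the $\varepsilon(0)$ term, using $\E[\varepsilon(0)\mid\bX]=\E[\varepsilon(1)\mid\bX]=0$. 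This yields $\E[\varepsilon(W)\mid\bX]=0$ and hence $m(\bX) = \mu(\bX) + \tau(\bX^{(\bC)})\pi(\bX)$.

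The payoff of this computation is a clean representation of the centered outcome:
\[
Y - m(\bX) = \tau(\bX^{(\bC)})\big(W - \pi(\bX)\big) + \varepsilon(W),
\]
that is, $\tilde{Y} = \tau(\bX^{(\bC)})\tilde{W} + \varepsilon(W)$ with $\tilde{W} = W - \pi(\bX)$. Substituting this into the conditional covariance and using that $\tau(\bX^{(\bC)})$ is constant given $\bX^{(\bC)}$, bilinearity gives
\[
\mathrm{Cov}[\tilde{W}, \tilde{Y}\mid\bX^{(\bC)}] = \tau(\bX^{(\bC)})\,\V[\tilde{W}\mid\bX^{(\bC)}] + \mathrm{Cov}[\tilde{W}, \varepsilon(W)\mid\bX^{(\bC)}].
\]
So the whole proposition reduces to showing that the cross term $\mathrm{Cov}[\tilde{W}, \varepsilon(W)\mid\bX^{(\bC)}]$ is zero.

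I expect this last step to be the main obstacle, because conditioning on the reduced vector $\bX^{(\bC)}$ alone does \emph{not} deliver independence of $W$ from the noise---this is precisely the identifiability difficulty flagged in Section~\ref{sec:vimp}. The resolution is to condition on the full $\bX$ first and then invoke the tower property. I would show $\E[\tilde{W}\,\varepsilon(W)\mid\bX]=0$ by expanding $\tilde{W}\,\varepsilon(W)$, using $W^2=W$ and $W(1-W)=0$ to reduce it to $W\varepsilon(1)(1-\pi(\bX)) - \pi(\bX)(1-W)\varepsilon(0)$, then factoring each conditional expectation by unconfoundedness and applying $\E[\varepsilon(0)\mid\bX]=\E[\varepsilon(1)\mid\bX]=0$. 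Taking $\E[\,\cdot\mid\bX^{(\bC)}]$ and using the tower property (since $\bX^{(\bC)}$ is a subvector of $\bX$) gives $\E[\tilde{W}\varepsilon(W)\mid\bX^{(\bC)}]=0$; the same tower argument applied to $\E[\varepsilon(W)\mid\bX]=0$ gives $\E[\varepsilon(W)\mid\bX^{(\bC)}]=0$. Hence the cross covariance vanishes and the proposition follows.
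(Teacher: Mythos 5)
Your proof is correct and follows essentially the same route as the paper's: both compute $m(\bX) = \mu(\bX) + \tau(\bX^{(\bC)})\pi(\bX)$, rewrite the centered outcome as $\tau(\bX^{(\bC)})(W - \pi(\bX)) + \varepsilon(W)$, pull $\tau(\bX^{(\bC)})$ out of the conditional covariance, and kill the noise cross-term by conditioning on the full $\bX$ (and $W$) before applying the tower property. Your explicit expansion of $\varepsilon(W)$ and factorization via unconfoundedness is just a more detailed rendering of the paper's step $\E[\varepsilon(W)\mid\bX,W]=0$.
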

On the other hand, removing an influential variable $j \in \bC$ to learn a causal forest is more delicate. Indeed, a local moment equation to identify the mean CATE over $X^{(j)}$ exists if the treatment effect is uncorrelated to the squared centered treatment assignment.
\begin{proposition} \label{prop:local_moment_j}
    If Assumption \ref{A:unconfound} is satisfied, then we have for $j \in \bC$
    \begin{align*}
        \quad \E[\tau(\bX^{(\bC)}) \mid \bX^{(-j)}] \times \V[W - \pi(\bX) \mid &\bX^{(-j)}] - \mathrm{Cov}[W - \pi(\bX), Y - m(\bX) \mid \bX^{(-j)}] \\ &+ \mathrm{Cov}[\tau(\bX^{(\bC)}), \pi(\bX)(1 - \pi(\bX)) \mid \bX^{(-j)}] = 0.
    \end{align*} 
    Then, for a query point $\bx^{(-j)} \in [0,1]^{p-1}$, if $\mathrm{Cov}[\tau(\bX^{(\bC)}), \pi(\bX)(1 - \pi(\bX)) \mid \bX^{(-j)} = \bx^{(-j)}]= 0$, $\E[\tau(\bX^{(\bC)}) \mid \bX^{(-j)} = \bx^{(-j)}]$ is identified by the original local moment equation of causal forests, with $\bX^{(-j)}$ as input variables, centered outcome $Y - m(\bX)$, and centered treatment assignment $W - \pi(\bX)$. 
\end{proposition}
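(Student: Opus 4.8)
The plan is to reduce the left-hand side to a direct computation of the conditional covariance, using the structural form of the model together with the tower property. First I would compute the centered outcome explicitly. Using the unconfoundedness Assumption~\ref{A:unconfound}, the decomposition $Y = \mu(\bX) + \tau(\bX^{(\bC)}) W + \varepsilon(W)$, and the moment conditions $\E[\varepsilon(0)\mid\bX] = \E[\varepsilon(1)\mid\bX] = 0$, I would first show that $\E[\varepsilon(W)\mid\bX] = 0$, which gives $m(\bX) = \mu(\bX) + \tau(\bX^{(\bC)})\pi(\bX)$. This yields the clean identity
\begin{align*}
    Y - m(\bX) = \tau(\bX^{(\bC)})\,(W - \pi(\bX)) + \varepsilon(W),
\end{align*}
writing the centered outcome as the CATE times the centered treatment, plus a noise term.

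Next I would establish two vanishing-mean facts. By the tower property, $\E[W - \pi(\bX)\mid\bX^{(-j)}] = \E[\E[W-\pi(\bX)\mid\bX]\mid\bX^{(-j)}] = 0$, so the conditional covariance on the left-hand side collapses to a plain conditional expectation of the product. Then, writing $\varepsilon(W) = W\varepsilon(1) + (1-W)\varepsilon(0)$ and using $W^2 = W$ and $W(1-W) = 0$, I would compute $\E[(W-\pi(\bX))\varepsilon(W)\mid\bX]$ and show it vanishes, again invoking unconfoundedness to factor $W$ away from $\varepsilon(0),\varepsilon(1)$ and then applying the zero-mean noise condition. Combining these, the cross term disappears and
\begin{align*}
    \mathrm{Cov}[W-\pi(\bX),\,Y-m(\bX)\mid\bX^{(-j)}] = \E\!\left[\tau(\bX^{(\bC)})\,(W-\pi(\bX))^2 \mid \bX^{(-j)}\right].
\end{align*}

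I would then exploit the Bernoulli structure of $W$, namely $\E[(W-\pi(\bX))^2\mid\bX] = \pi(\bX)(1-\pi(\bX))$, and condition on $\bX$ inside the expectation (legitimate since $\tau(\bX^{(\bC)})$ is $\bX$-measurable) to obtain $\E[\tau(\bX^{(\bC)})\pi(\bX)(1-\pi(\bX))\mid\bX^{(-j)}]$. A conditional covariance decomposition of this product, combined with the identity $\E[\pi(\bX)(1-\pi(\bX))\mid\bX^{(-j)}] = \V[W-\pi(\bX)\mid\bX^{(-j)}]$ (which follows from the same tower argument), rearranges directly into the claimed moment equation. The main obstacle is the careful bookkeeping of the cross term $\E[(W-\pi(\bX))\varepsilon(W)\mid\bX^{(-j)}]$: because $\varepsilon(W)$ depends on the realized treatment, one cannot naively separate it from $W$, and the argument must condition on $\bX$ first so that unconfoundedness decouples $W$ from the potential-outcome noise. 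For the second statement, once $\mathrm{Cov}[\tau(\bX^{(\bC)}),\pi(\bX)(1-\pi(\bX))\mid\bX^{(-j)}=\bx^{(-j)}] = 0$, the corrective term vanishes and the equation reduces to exactly the causal-forest moment equation~(\ref{eq:local_moment_full}) with $\bX^{(-j)}$ as inputs and the centered variables $Y-m(\bX)$, $W-\pi(\bX)$, so that $\E[\tau(\bX^{(\bC)})\mid\bX^{(-j)}=\bx^{(-j)}]$ plays the role of the identified target.
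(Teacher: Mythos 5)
Your proposal is correct and follows essentially the same route as the paper's proof: both reduce the conditional covariance to $\E[\tau(\bX^{(\bC)})(W-\pi(\bX))^2\mid\bX^{(-j)}]$ via the identity $Y-m(\bX)=\tau(\bX^{(\bC)})(W-\pi(\bX))+\varepsilon(W)$, then split off the covariance correction term; the only cosmetic difference is that you replace $(W-\pi(\bX))^2$ by $\pi(\bX)(1-\pi(\bX))$ before the covariance decomposition rather than after.
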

\citet[Footnote $5$, page $42$]{athey2019estimating} conduct an empirical analysis using causal forests, and state in a footnote, that local centering ``eliminates confounding effects. Thus, we do not need to give the causal forest all features $X^{(j)}$ that may be confounders. Rather, we can focus on features that we
believe may be treatment modifiers''. However, Propositions \ref{prop:local_moment_C} and  \ref{prop:local_moment_j} show that this statement must be completed. Indeed, Proposition \ref{prop:local_moment_C} states that confounders not involved in the heterogeneity of the treatment effect, i.e. confounders that do no belong to $\bC$, may be dropped without hurting the identifiability of $\tau$, thanks the the local centering step. On the other hand, Proposition \ref{prop:local_moment_j} shows that this is clearly not the case for confounders involved in heterogeneity, as the treatment effect is not properly identified by the local moment equation of causal forests, even with local centering. To overcome this problem, we introduce a corrective term in the retrained forest.

\paragraph{Corrected causal forests.}
The additional covariance term in Proposition \ref{prop:local_moment_j} can be estimated using the original causal forest fit with all inputs. Therefore, we propose the corrected causal forest estimate when removing a confounding variable $X^{(j)}$ with $j \in \bC$. 
Recall that the weights $\alpha'(\bx^{(-j)})$ are generated by the causal forest using centered data and dropping variable $X^{(j)}$, to define $\smash{\tau_{M,n}^{(-j)}(\bx)}$.
We define the corrected causal forest estimate $\smash{\theta_{M,n}^{(-j)}(\bx)}$ as
\begin{align} \label{eq:correction}
    \theta_{M,n}^{(-j)}(\bx) = \tau_{M,n}^{(-j)}(\bx) - \frac{\sum_{i=1}^n \alpha'_i(\bx^{(-j)}) \tilde{W}_i^{2} \tau_{M,n}(\bX_i) - \overline{W^{2}_{\alpha'}} \overline{\tau}_{\alpha'}}{\overline{W^{2}_{\alpha'}} - (\overline{W}_{\alpha'})^2},
\end{align}
where $\overline{W^{2}_{\alpha'}} = \sum_{i=1}^n \alpha'_i(\bx^{(-j)}) \tilde{W}_i^{ 2}$, $\overline{W}_{\alpha'} = \sum_{i=1}^n \alpha'_i(\bx^{(-j)}) \tilde{W}_i$, and the mean treatment effect is $\overline{\tau}_{\alpha'} = \sum_{i=1}^n \alpha'_i(\bx^{(-j)}) \tau_{M,n}(\bX_i)$. With such correction, the causal forest retrained without a confounding variable is consistent, as we will show in Section \ref{sec:theory}.
Note however that, in practice, the correction term can be small, as  demonstrated in the experimental Section \ref{sec:xp}. 

\paragraph{Variable importance estimate.}
Using $\Dn' = \{(\bX'_i, Y'_i, W'_i)\}_{i=1}^n$ an independent copy of $\Dn$, we define
\begin{align} \label{eq:VIhat}
    \VIhat = \frac{\sum_{i = 1}^n \big[\tau_{M,n}(\bX'_i) - \theta^{(-j)}_{M,n}(\bX'_i)\big]^2}{\sum_{i = 1}^n \big[\tau_{M,n}(\bX'_i) - \overline{\tau_{M,n}} \big]^2} - \mathrm{I}_n^{(0)},
\end{align}
where $\overline{\tau_{M,n}} = \sum_{i = 1}^n \tau_{M,n}(\bX'_i)/n$, and $\mathrm{I}_n^{(0)}$ is the mean squared difference between the initial forest predictions and the predictions of the corrected forest $\smash{\theta^{(0)}_{M,n}(\bX'_i, \Theta'_M)}$, retrained with still all the inputs variables involved but a new randomization $\Theta'_{M}$, i.e.,
\begin{align*}
    \mathrm{I}_n^{(0)} = \frac{\sum_{i = 1}^n \big[\tau_{M,n}(\bX'_i, \Theta_M) - \theta^{(0)}_{M,n}(\bX'_i, \Theta'_M)\big]^2}{\sum_{i = 1}^n \big[\tau_{M,n}(\bX'_i) - \overline{\tau_{M,n}} \big]^2}.
\end{align*}
In fact, $\mathrm{I}_n^{(0)}$ partially removes the bias of the first term of $\VIhat$, due to the randomization of the forest training, and vanishes as the sample size increases if the causal forest converges.
Notice that the above definition is formalized with $\Dn'$ for the sake of clarity, but that such additional data is usually not available in practice. Instead, out-of-bag causal forest estimates are rather used to define $\smash{\VIhat}$, as summarized in Algorithm \ref{algo_VI} below.


\begin{algorithm}
\caption{Variable importance algorithm for causal forests}
\label{algo_VI}
\begin{algorithmic}[1]
\REQUIRE A dataset $\Dn = \{(\bX_i, Y_i, W_i)\}_{i=1}^n$ containing all confounding variables.
\STATE Perform local centering of outputs $Y_i$ and treatment assignments $W_i$ to get the centered dataset $\smash{\{(\bX_i, \tilde{Y}_i, \tilde{W}_i)\}_{i=1}^n}$, using regression forests and out-of-bag estimates.
\STATE Train a causal forest with the centered data $\smash{\{(\bX, \tilde{Y}_i, \tilde{W}_i)\}_{i=1}^n}$ containing all variables.
\FOR{$j \in \{1,\hdots,p\}$}
    \STATE Train a corrected causal forest with the centered data $\{(\bX^{(-j)}, \tilde{Y}_i, \tilde{W}_i)\}_{i=1}^n$, where the $j$-th variable is removed.
    \STATE Compute $\VIhat$ according to equation (\ref{eq:VIhat}) and using the initial forest and the retrained forest of the previous step.
\ENDFOR
\RETURN $\big\{\VIhat\big\}_{j=1}^p$
\end{algorithmic}
\end{algorithm}

\section{Theoretical Properties} \label{sec:theory}

Propositions \ref{prop:local_moment_C} and \ref{prop:local_moment_j} are the cornerstones of the consistency of our variable importance algorithm. This result relies on the asymptotic analysis of \citet{athey2019generalized}, which states the consistency of causal forests in Theorem \ref{thm:cf_consistency}. Several mild assumptions are required, mainly about the input distribution, the regularity of the involved functions, and the forest growing.
Then, the core of our mathematical analysis is the extension to the case of a causal forest fit without a given input variable. When the removed input is a confounding variable, consistency is obtained thanks to the corrective term introduced in equation (\ref{eq:correction}) of the previous section. Then, the convergence of our variable importance algorithm follows using a standard asymptotic analysis.
We first formalize the required assumptions and specifications on the tree growing from \citet{athey2019generalized}, that are frequently used in the theoretical analysis of random forests \citep{meinshausen2006quantile, scornet2015consistency, wager2018estimation}.
\begin{assumption} \label{A:X_density}
    The input $\bX$ takes value in $[0,1]^p$, and admits a density bounded from above and below by strictly positive constants.
\end{assumption}
\begin{assumption} \label{A:lipschitz}
    The functions $\pi$, $m$, and $\tau$ are Lipschitz, $0 < \pi(\bx) < 1$ for $\bx \in [0,1]^p$, and $\mu$ and $\tau$ are bounded.
\end{assumption}
\begin{specification} \label{Spec:forests}
    Tree splits are constrained to put at least a fraction $\gamma > 0$ of the parent node observations in each child node. The probability to split on each input variable at every tree node is greater than $\delta > 0$. The forest is honest, and built via subsampling with subsample size $a_n$, satisfying $a_n/n \to 0$ and $a_n \to \infty$. 
\end{specification}
The first part of Specification \ref{Spec:forests} is originally introduced by \citet{meinshausen2006quantile}. The idea is to enforce the diameter of each cell of the trees to vanish as the sample size increases, by adding a constraint on the minimum size of children nodes, and slightly increasing the randomization of the variable selection for the split at each node. Then, vanishing cell diameters combined to Lipschitz functions lead to the forest convergence. Additionally, honesty is a key property of the tree growing, extensively discussed in \citet{wager2018estimation}, where half of the data is used to optimize the splits, and the other half to estimate the cell outputs. With these assumptions satisfied, we state below the causal forest consistency proved in \citet{athey2019generalized}. Notice that the original proof is conducted for generalized forests, for any local moment equations satisfying regularity assumptions, automatically fulfilled for the moment equation (\ref{eq:local_moment_full}) involved in our analysis. In Appendix \ref{App:proofs}, we give a specific proof of Theorem \ref{thm:cf_consistency} in the case of causal forests. We built on this proof to further extend the consistency result when a confounding variable is removed.
\begin{theorem}[Theorem $3$ from \citet{athey2019generalized}] \label{thm:cf_consistency}
    If Assumptions \ref{A:unconfound}-\ref{A:lipschitz} and Specification \ref{Spec:forests} are satisfied, and the causal forest $\tau_{M,n}(\bx)$ is built with $\Dn$ without local centering, then we have for $\bx \in [0,1]^p$,
    \begin{align*}
        \tau_{M,n}(\bx) \overset{p}{\longrightarrow} \tau(\bx^{(\bC)}).
    \end{align*}
\end{theorem}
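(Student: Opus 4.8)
The plan is to exploit the closed-form expression (\ref{eq:causal_forest}) for the causal forest and to prove that its numerator and denominator converge in probability, at the query point $\bx$, to the population conditional covariance $\mathrm{Cov}[W, Y \mid \bX = \bx]$ and variance $\V[W \mid \bX = \bx]$ respectively; the conclusion then follows by the continuous mapping theorem together with the identifying moment equation (\ref{eq:local_moment_full}). Writing $\tau_{M,n}(\bx)$ as the ratio of $N_n(\bx) = \sum_i \alpha_i(\bx) W_i Y_i - \overline{W}_{\alpha}\overline{Y}_{\alpha}$ over $D_n(\bx) = \sum_i \alpha_i(\bx)(W_i - \overline{W}_{\alpha})^2$, it suffices to show $N_n(\bx) \overset{p}{\longrightarrow} \mathrm{Cov}[W, Y \mid \bX = \bx]$ and $D_n(\bx) \overset{p}{\longrightarrow} \V[W \mid \bX = \bx]$. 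Since $W \in \{0,1\}$ we have $\V[W \mid \bX = \bx] = \pi(\bx)(1 - \pi(\bx))$, which is bounded away from $0$ by Assumption \ref{A:lipschitz}, so the ratio is a continuous function of the two limits; and by Proposition \ref{prop_identifiability} and equation (\ref{eq:local_moment_full}), that limiting ratio is exactly $\tau(\bx^{(\bC)})$.

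First I would establish the \emph{localization} of the forest weights: the effective neighborhood on which $\{\alpha_i(\bx)\}$ places its mass shrinks to $\{\bx\}$ as $n \to \infty$. This is where Specification \ref{Spec:forests} enters. The constraint that each split leaves at least a fraction $\gamma$ of observations in each child, combined with the lower bound $\delta$ on the probability of splitting along every coordinate, forces the diameter of the leaf containing $\bx$ to vanish in all directions; together with the subsampling regime $a_n \to \infty$, $a_n/n \to 0$ and Assumption \ref{A:X_density}, one obtains, as in the cell-diameter analysis of \citet{meinshausen2006quantile} and \citet{wager2018estimation}, a sequence $s_n \to 0$ with $\sum_i \alpha_i(\bx)\,\mathbf{1}\{\lVert \bX_i - \bx \rVert > s_n\} \overset{p}{\longrightarrow} 0$. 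I would record the rate $s_n$, as it controls the bias in the next step.

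Next I would use honesty to turn localization into convergence of the weighted moments. Because the forest is honest, the partition is independent of the responses used for estimation within each leaf, so each weighted average is, conditionally on the tree structure, an unbiased estimate of a local conditional expectation. I would decompose $\sum_i \alpha_i(\bx) W_i Y_i - \E[WY \mid \bX = \bx]$ into a bias term and a stochastic term. The bias is controlled by the Lipschitz regularity of $m$, $\pi$ and $\tau$ (Assumption \ref{A:lipschitz}): on the vanishing neighborhood from the previous step, $\E[WY \mid \bX]$ deviates from $\E[WY \mid \bX = \bx]$ by $O(s_n) \to 0$. The stochastic term vanishes because the individual weights are uniformly small, of order the reciprocal of the effective number of neighbors, which grows with $a_n$, while the responses are bounded by Assumption \ref{A:lipschitz}. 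Applying the same argument to $\overline{W}_{\alpha}$, $\overline{Y}_{\alpha}$ and $\sum_i \alpha_i(\bx) W_i^2$ yields $\overline{W}_{\alpha} \overset{p}{\longrightarrow} \pi(\bx)$, $\overline{Y}_{\alpha} \overset{p}{\longrightarrow} m(\bx)$, and hence $N_n(\bx) \overset{p}{\longrightarrow} \mathrm{Cov}[W, Y \mid \bX = \bx]$ and $D_n(\bx) \overset{p}{\longrightarrow} \pi(\bx)(1 - \pi(\bx))$.

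The main obstacle is the first step: the quantitative control of the leaf diameter and the resulting concentration of the forest weights on a vanishing neighborhood of $\bx$. This is the heart of the random-forest machinery, and it is precisely where Specification \ref{Spec:forests} must be used with care---the randomized split-variable selection (through $\delta$) guarantees shrinkage in every coordinate, the balanced-split condition (through $\gamma$) rules out degenerate cells, and honesty together with subsampling keeps the stochastic error under control. By contrast, once the weighted moments are shown to converge, the passage to the limit $\tau(\bx^{(\bC)})$ is routine: it is a continuous-mapping argument relying on the denominator being bounded below, and closed by the identification result of Proposition \ref{prop_identifiability}.
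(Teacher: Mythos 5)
Your proposal is correct and follows essentially the same route as the paper's proof: decompose $\tau_{M,n}(\bx)$ into weighted-moment numerator and denominator, use honesty to decouple the weights from the responses, control the bias via the Lipschitz regularity of $\pi$ and $m$ together with the vanishing cell diameter guaranteed by Specification \ref{Spec:forests} (the paper invokes equation (26) of the supplement of \citet{athey2019generalized} for exactly your localization step), kill the variance via the subsampling rate $a_n/n \to 0$ (their Lemma 7), and close with Slutsky's lemma and the identifying moment equation (\ref{eq:local_moment_full}). The only nuance worth flagging is that the effective number of neighbors scales like $n/a_n$ rather than $a_n$, so it is the condition $a_n/n \to 0$ (not $a_n \to \infty$ alone) that drives the stochastic term to zero.
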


Next, we need a slight simplification of our variable importance algorithm to alleviate the mathematical analysis. We assume that a centered dataset $\Dn^{\star} = \{(\bX_i, W_i^{\star}, Y_i^{\star})\}$ is directly available, where $W_i^{\star} = W_i - \pi(\bX_i)$ and $Y_i^{\star} = Y_i - m(\bX_i)$. A causal forest grown with this dataset where a given input variable $j \in \{1,\hdots,p\} \setminus \bC$ is dropped, consistently estimates the treatment effect as stated below. Consistency also holds for variables $j \in \bC$ in specific cases, whereas in the general case, the corrected term introduced in equation (\ref{eq:correction}) is required. 
Theorem \ref{thm:cf_consistency_centered} states the consistency of causal forests when an input variable is removed.
\begin{theorem} \label{thm:cf_consistency_centered}
    If Assumptions \ref{A:unconfound}-\ref{A:lipschitz} and Specification \ref{Spec:forests} are satisfied, and the causal forest $\tau_{M,n}^{(-j)}(\bx)$ is fit with the centered data $\smash{\Dn^{\star (-j)}}$ without the $j$-th variable,
    
    (i) for $j \in \{1,\hdots,p\} \setminus \bC$ and $\bx \in [0,1]^p$, we have
    \begin{align*}
        \tau_{M,n}^{(-j)}(\bx) \overset{p}{\longrightarrow} \tau(\bx^{(\bC)}),
    \end{align*}
    
    (ii) for $j \in \bC$  and $\bx \in [0,1]^p$, if $\mathrm{Cov}[\tau(\bX^{(\bC)}), \pi(\bX)(1 - \pi(\bX)) \mid \bX^{(-j)} = \bx^{(-j)}]= 0$, we have
    \begin{align*}
        \tau_{M,n}^{(-j)}(\bx) \overset{p}{\longrightarrow} \E[\tau(\bX^{(\bC)}) \mid \bX^{(-j)} = \bx^{(-j)}].
    \end{align*}
\end{theorem}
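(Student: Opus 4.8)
The plan is to treat $\tau_{M,n}^{(-j)}(\bx)$ as an instance of the generalized random forest of \citet{athey2019generalized}, now fed the reduced inputs $\bX^{(-j)}$ and the exactly centered responses $W_i^{\star} = W_i - \pi(\bX_i)$ and $Y_i^{\star} = Y_i - m(\bX_i)$, and to split the argument into a \emph{convergence} step and an \emph{identification} step. For the convergence step, note that $\tau_{M,n}^{(-j)}(\bx)$ in equation (\ref{eq:causal_forest}), evaluated on the centered data $\Dn^{\star(-j)}$, is the empirical solution of the causal local moment equation, whose population counterpart at the query point is
\begin{align*}
    \theta^{\star}(\bx^{(-j)}) = \frac{\mathrm{Cov}[W^{\star}, Y^{\star} \mid \bX^{(-j)} = \bx^{(-j)}]}{\V[W^{\star} \mid \bX^{(-j)} = \bx^{(-j)}]}.
\end{align*}
I would then re-run the consistency argument of Theorem \ref{thm:cf_consistency}, but with $\bX$ replaced by $\bX^{(-j)}$ and $(W,Y)$ by $(W^{\star}, Y^{\star})$, to obtain $\tau_{M,n}^{(-j)}(\bx) \overset{p}{\longrightarrow} \theta^{\star}(\bx^{(-j)})$. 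This requires checking that the regularity conditions of the generalized forest analysis still hold for the reduced centered problem: non-degeneracy of the score, i.e. $\V[W^{\star} \mid \bX^{(-j)} = \bx^{(-j)}] = \E[\pi(\bX)(1-\pi(\bX)) \mid \bX^{(-j)} = \bx^{(-j)}] > 0$, which follows from $0 < \pi < 1$ on the compact set $[0,1]^p$ in Assumption \ref{A:lipschitz}; Lipschitz continuity of $\bx^{(-j)} \mapsto \theta^{\star}(\bx^{(-j)})$, which follows from the Lipschitz assumptions on $\pi, m, \tau$ combined with the density bounds of Assumption \ref{A:X_density}; and the honesty and split conditions, which are unchanged and supplied by Specification \ref{Spec:forests}.

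For the identification step, I would compute $\theta^{\star}(\bx^{(-j)})$ explicitly using the expansion underlying Proposition \ref{prop:local_moment_j}, which in fact holds for any index $j$ and not only for $j \in \bC$. Writing $Y^{\star} = \tau(\bX^{(\bC)}) W^{\star} + \varepsilon(W)$ and using that, under Assumption \ref{A:unconfound}, $\E[W^{\star} \mid \bX] = 0$, $\E[W^{\star} \varepsilon(W) \mid \bX] = 0$, and $\E[(W^{\star})^2 \mid \bX] = \pi(\bX)(1-\pi(\bX))$, the conditional covariance decomposition yields
\begin{align*}
    \theta^{\star}(\bx^{(-j)}) = \E[\tau(\bX^{(\bC)}) \mid \bX^{(-j)} = \bx^{(-j)}] + \frac{\mathrm{Cov}[\tau(\bX^{(\bC)}), \pi(\bX)(1 - \pi(\bX)) \mid \bX^{(-j)} = \bx^{(-j)}]}{\V[W^{\star} \mid \bX^{(-j)} = \bx^{(-j)}]}.
\end{align*}
Part (ii) is then immediate: under the stated hypothesis the correction covariance vanishes at $\bx^{(-j)}$, leaving $\theta^{\star}(\bx^{(-j)}) = \E[\tau(\bX^{(\bC)}) \mid \bX^{(-j)} = \bx^{(-j)}]$. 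For part (i), since $j \notin \bC$ the function $\tau(\bX^{(\bC)})$ is $\bX^{(-j)}$-measurable, so $\E[\tau(\bX^{(\bC)}) \mid \bX^{(-j)}] = \tau(\bx^{(\bC)})$ and the conditional covariance is identically zero, giving $\theta^{\star}(\bx^{(-j)}) = \tau(\bx^{(\bC)})$. Combining each case with the convergence step yields both claims.

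I expect the main obstacle to be the convergence step rather than the algebra, which is essentially a rearrangement of Propositions \ref{prop:local_moment_C} and \ref{prop:local_moment_j}. The difficulty is that Theorem \ref{thm:cf_consistency} is phrased as convergence to $\tau$ under unconfoundedness with the \emph{full} covariate set, whereas here unconfoundedness generally fails for $\bX^{(-j)}$, so the theorem cannot be invoked as a black box. The point I would make precise is that the generalized forest analysis only ever delivers convergence to the solution of the population local moment equation, the causal interpretation of that solution being irrelevant to the proof of convergence itself. I would therefore build directly on the causal-forest-specific proof of Theorem \ref{thm:cf_consistency} given in Appendix \ref{App:proofs}; the only genuinely new verification is that the centered reduced-input problem still satisfies the non-degeneracy and Lipschitz-target conditions it requires, while the remaining ingredients (terminal-leaf diameter shrinkage, honesty, and subsampling) transfer unchanged.
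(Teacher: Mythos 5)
Your proposal is correct and follows essentially the same route as the paper: rerun the consistency argument of Theorem \ref{thm:cf_consistency} on the reduced, centered data to obtain convergence to the solution of the population local moment equation conditional on $\bX^{(-j)}$, then identify that solution via Propositions \ref{prop:local_moment_C} and \ref{prop:local_moment_j}. The only (cosmetic) difference is that you handle both cases uniformly through the covariance decomposition of Proposition \ref{prop:local_moment_j} (observing that it holds for any $j$ and that the correction term vanishes identically when $j \notin \bC$), whereas the paper invokes Proposition \ref{prop:local_moment_C} for case (i) and Proposition \ref{prop:local_moment_j} for case (ii); your explicit remark that the forest convergence argument is agnostic to the causal interpretation of the moment-equation solution is exactly the point the paper relies on implicitly.
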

Theorem \ref{thm:cf_consistency_centered} is a direct consequence of Propositions \ref{prop:local_moment_C} and \ref{prop:local_moment_j} combined with Theorem \ref{thm:cf_consistency}. Indeed, provided that the outcome and treatment assignment are centered, if the removed variable $j$ is not involved in the treatment heterogeneity, i.e. $j \notin \bC$, consistency holds. On the other hand, if $j \in \bC$, we need an additional assumption that $\tau(\bX^{(\bC)})$ and $\pi(\bX)(1 - \pi(\bX))$ are not correlated conditional on $\bX^{(-j)} = \bx^{(-j)}$, where $\bx^{(-j)}$ is the new query point. Otherwise, consistency is obtained with a corrective term defined in equation (\ref{eq:correction}), as we will see. However, we need an additional small modification of causal forests to enforce the generated estimates to be bounded, and to limit the number of observations in each terminal leave of trees, as stated in the specification below. 
Notice that such modifications are quite mild. Indeed, the true treatment effect is bounded by assumption. For the second part, the number of observations in each terminal leave may not be bounded in specific cases, because of honest tree growing. Nevertheless, it is still possible to comply with this specification, by randomly splitting cells that exceed the number of observation threshold.
\begin{specification} \label{Spec:truncated}
    The causal forest estimates are truncated from below and above by $-K$ and $K$, where $K \in \mathbb{R}$ is an arbitrarily large constant. The number of observations in each terminal leave of trees is smaller than a threshold $t_0 \in \mathbb{N}^{\star}$.
\end{specification}
\begin{theorem} \label{thm:cf_consistency_corrected}
    Let the initial causal forest $\tau_{M,n}(\bx)$ fit with the centered data $\Dn^{\star}$, and the corrected causal forest $\smash{\theta_{M,n}^{(-j)}(\bx)}$ fit using $\tau_{M,n}(\bx)$ and  $\smash{\Dn^{\star (-j)}}$, an independent copy of the centered data with the $j$-th variable dropped.
    If Assumptions \ref{A:unconfound}-\ref{A:lipschitz}, and Specifications \ref{Spec:forests} and \ref{Spec:truncated} are satisfied, then for $j \in \{1,\hdots,p\}$ and $\bx \in [0,1]^p$, we have
    \begin{align*}
        \theta_{M,n}^{(-j)}(\bx) \overset{p}{\longrightarrow} \E[\tau(\bX^{(\bC)}) \mid \bX^{(-j)} = \bx^{(-j)}].
    \end{align*}
\end{theorem}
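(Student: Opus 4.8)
The plan is to treat the two pieces of the corrected estimate in equation~(\ref{eq:correction}) separately: the uncorrected retrained forest $\tau_{M,n}^{(-j)}(\bx)$ and the correction term. First I would identify the limit of the uncorrected forest. Since a causal forest always converges to the solution of the local moment equation it actually solves, the argument used to prove Theorem~\ref{thm:cf_consistency}, applied verbatim to the reduced feature vector $\bX^{(-j)}$, yields
\begin{align*}
    \tau_{M,n}^{(-j)}(\bx) \overset{p}{\longrightarrow} \frac{\mathrm{Cov}[W - \pi(\bX), Y - m(\bX) \mid \bX^{(-j)} = \bx^{(-j)}]}{\V[W - \pi(\bX) \mid \bX^{(-j)} = \bx^{(-j)}]}.
\end{align*}
By Proposition~\ref{prop:local_moment_j}, this ratio equals $\E[\tau(\bX^{(\bC)}) \mid \bX^{(-j)} = \bx^{(-j)}]$ plus the bias term $\mathrm{Cov}[\tau(\bX^{(\bC)}), \pi(\bX)(1 - \pi(\bX)) \mid \bX^{(-j)} = \bx^{(-j)}] / \V[W - \pi(\bX) \mid \bX^{(-j)} = \bx^{(-j)}]$, so it remains to show that the correction term converges to exactly this bias.

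Second I would analyze the correction term. The retrained forest weights $\alpha'_i(\bx^{(-j)})$ concentrate on observations with $\bX_i^{(-j)}$ close to $\bx^{(-j)}$, so that for any bounded measurable $g$ the weighted average $\sum_i \alpha'_i(\bx^{(-j)}) g(\bX_i)$ converges in probability to $\E[g(\bX) \mid \bX^{(-j)} = \bx^{(-j)}]$; this is the weight-consistency property established in the proof of Theorem~\ref{thm:cf_consistency} under Specification~\ref{Spec:forests}. Taking $g \equiv 1$, $g(\bX) = (W - \pi(\bX))^2$, and $g(\bX) = W - \pi(\bX)$ shows the denominator $\overline{W^2_{\alpha'}} - (\overline{W}_{\alpha'})^2$ converges to $\V[W - \pi(\bX) \mid \bX^{(-j)} = \bx^{(-j)}]$. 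For the numerator, the plug-in predictions satisfy $\tau_{M,n}(\bX_i) \overset{p}{\longrightarrow} \tau(\bX_i^{(\bC)})$ by Theorem~\ref{thm:cf_consistency}, and combining this with the weight concentration gives
\begin{align*}
    \sum_{i=1}^n \alpha'_i(\bx^{(-j)}) \tilde{W}_i^2 \tau_{M,n}(\bX_i) - \overline{W^2_{\alpha'}} \, \overline{\tau}_{\alpha'} \overset{p}{\longrightarrow} \mathrm{Cov}[(W - \pi(\bX))^2, \tau(\bX^{(\bC)}) \mid \bX^{(-j)} = \bx^{(-j)}].
\end{align*}
Conditioning on $\bX$ and using that $W$ is Bernoulli with $\E[(W - \pi(\bX))^2 \mid \bX] = \pi(\bX)(1 - \pi(\bX))$, together with the measurability of $\tau(\bX^{(\bC)})$ with respect to $\bX$, the tower property rewrites this conditional covariance as $\mathrm{Cov}[\tau(\bX^{(\bC)}), \pi(\bX)(1 - \pi(\bX)) \mid \bX^{(-j)} = \bx^{(-j)}]$. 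Dividing the numerator limit by the denominator limit and applying Slutsky's lemma, the correction term converges to the bias identified in the first step, so the two contributions cancel and $\theta_{M,n}^{(-j)}(\bx) \overset{p}{\longrightarrow} \E[\tau(\bX^{(\bC)}) \mid \bX^{(-j)} = \bx^{(-j)}]$, which is the claim. For $j \notin \bC$ the bias covariance vanishes and the statement reduces to Theorem~\ref{thm:cf_consistency_centered}(i).

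The hard part will be making the numerator limit rigorous, because the summand mixes the retrained weights $\alpha'_i$ with the plug-in predictions $\tau_{M,n}(\bX_i)$ of a separately fitted forest. Pointwise convergence $\tau_{M,n}(\bX_i) \to \tau(\bX_i^{(\bC)})$ is not enough: one must control the prediction error uniformly across the weighted terms and rule out that a handful of heavily weighted observations dominate the sum. This is precisely where Specification~\ref{Spec:truncated} enters---the truncation bound $K$ makes every summand uniformly bounded, so dominated-convergence and uniform-integrability arguments apply, while the leaf-occupancy cap $t_0$ prevents any single weight $\alpha'_i$ from carrying non-negligible mass, ensuring the weighted empirical averages obey a law of large numbers. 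Finally, the independence of $\smash{\Dn^{\star(-j)}}$ from the data fitting $\tau_{M,n}(\bx)$ decouples the weights from the prediction errors, so that conditionally on the initial forest the numerator is a genuine weighted average of the fixed function $\tau_{M,n}$, to which the weight-concentration argument applies directly.
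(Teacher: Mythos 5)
Your proposal is correct and follows essentially the same route as the paper: decompose $\theta_{M,n}^{(-j)}$ into the raw retrained forest (whose limit is identified via Proposition \ref{prop:local_moment_j}) plus the correction term, show the correction converges to the covariance-over-variance bias by splitting $\tau_{M,n}(\bX_i)$ into $\tau(\bX_i^{(\bC)})$ plus an error controlled via the leaf-occupancy cap $t_0$, the truncation bound $K$, and the independence of $\Dn^{\star(-j)}$ from the initial forest --- exactly the ingredients the paper uses (its bound $\P(\alpha'_1(\bx^{(-j)})>0)\leq Mt_0/n$ is your ``no single weight carries non-negligible mass''). The only slight overstatement is invoking weight-consistency for arbitrary bounded measurable $g$, whereas the argument inherited from Theorem \ref{thm:cf_consistency} needs Lipschitz conditional means; this is harmless here since all the functions you actually average ($\pi(1-\pi)$, $\tau$, etc.) are Lipschitz under Assumption \ref{A:lipschitz}.
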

Since Theorems \ref{thm:cf_consistency} and \ref{thm:cf_consistency_corrected} give the consistency of causal forests respectively fit with all input variables, and when a given variable is removed, we can deduce the consistency of our variable importance algorithm from standard asymptotic arguments.
\begin{theorem} \label{thm:vimp_consistency}
    Under the same assumptions than Theorem \ref{thm:cf_consistency_corrected}, we have for all $j \in \{1,\hdots,p\}$
    \begin{align*}
        \VIhat \overset{p}{\longrightarrow} \VI.
    \end{align*}
\end{theorem}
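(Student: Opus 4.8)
The plan is to reduce the convergence of $\VIhat$ to the pointwise consistency statements already established in Theorems \ref{thm:cf_consistency}, \ref{thm:cf_consistency_centered}, and \ref{thm:cf_consistency_corrected}, and then to close the gap between pointwise-in-query-point convergence and the empirical averages over the independent copy $\Dn'$ that actually appear in \eqref{eq:VIhat}. First I would rewrite the estimator (the $1/n$ factors cancel in the ratio) as $\VIhat = (N_n - C_n)/D_n$, where $N_n = \frac1n\sum_i [\tau_{M,n}(\bX'_i) - \theta^{(-j)}_{M,n}(\bX'_i)]^2$, $C_n = \frac1n\sum_i[\tau_{M,n}(\bX'_i,\bTheta_M) - \theta^{(0)}_{M,n}(\bX'_i,\bTheta'_M)]^2$, and $D_n = \frac1n\sum_i[\tau_{M,n}(\bX'_i) - \overline{\tau_{M,n}}]^2$, using that both the leading term and the debiasing term $\mathrm{I}_n^{(0)}$ share the denominator $D_n$. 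The target limits are $N = \E[(\tau(\bX^{(\bC)}) - \eta(\bX^{(-j)}))^2]$ with $\eta(\bX^{(-j)}) = \E[\tau(\bX^{(\bC)}) \mid \bX^{(-j)}]$, then $D = \V[\tau(\bX^{(\bC)})]$, and $C = 0$. Granting these, Slutsky and the continuous mapping theorem give $\VIhat \overset{p}{\longrightarrow} N/D$, which is exactly $\VI$ by the second expression in \eqref{eq:th_vimp}; here $D > 0$ is guaranteed by Assumption \ref{A:tau}, so the ratio is well defined and the map $(x,y)\mapsto x/y$ is continuous at the limit.

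The core step, and the one I expect to be the main obstacle, is a single averaging lemma: if $g_n(\bx)$ is a sequence of (training-data dependent) functions with $g_n(\bx) \overset{p}{\longrightarrow} g(\bx)$ for each fixed $\bx$ and $\sup_{n,\bx}|g_n(\bx)| \le C < \infty$, then $\frac1n\sum_i g_n(\bX'_i) \overset{p}{\longrightarrow} \E[g(\bX')]$ over the independent copy $\Dn'$. I would prove it by bounding $\E\big|\frac1n\sum_i g_n(\bX'_i) - \E[g(\bX')]\big|$ by $\E|g_n(\bX'_1) - g(\bX'_1)| + \E\big|\frac1n\sum_i g(\bX'_i) - \E[g(\bX')]\big|$. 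The second summand vanishes by the weak law of large numbers for the iid bounded variables $g(\bX'_i)$. For the first, independence of $\Dn'$ from the training data lets me write it as $\E_{\bX'}\big[\E_{\mathrm{train}}|g_n(\bX') - g(\bX')|\big]$; for fixed $\bx'$ the inner expectation tends to $0$ by bounded convergence (pointwise convergence in probability plus the uniform bound), and a second dominated-convergence pass over $\bX'$ finishes it. The crucial ingredient here is Specification \ref{Spec:truncated}: truncating the forest estimates at $\pm K$ (and taking $K$ large enough to also dominate the bounded function $\tau$) makes every integrand uniformly bounded, which is precisely what licenses the two interchanges of limit and expectation, and is why the truncation specification appears in the hypotheses.

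It then remains to instantiate the lemma three times. For $D_n$, applying it to $\tau_{M,n}(\bx)^2$ and to $\tau_{M,n}(\bx)$, whose pointwise limits $\tau(\bx^{(\bC)})^2$ and $\tau(\bx^{(\bC)})$ come from Theorem \ref{thm:cf_consistency} in the centered all-variable setting underlying Theorem \ref{thm:cf_consistency_corrected}, and combining through the continuous mapping theorem yields $D_n \overset{p}{\longrightarrow} \E[\tau(\bX^{(\bC)})^2] - (\E[\tau(\bX^{(\bC)})])^2 = \V[\tau(\bX^{(\bC)})]$. For $N_n$, the squared-difference integrand converges pointwise in probability to $(\tau(\bx^{(\bC)}) - \eta(\bx^{(-j)}))^2$ by Theorem \ref{thm:cf_consistency_corrected} (together with the consistency of the first forest), so the lemma gives $N_n \overset{p}{\longrightarrow} N$. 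For $C_n$, both $\tau_{M,n}(\cdot,\bTheta_M)$ and $\theta^{(0)}_{M,n}(\cdot,\bTheta'_M)$ converge pointwise in probability to the same deterministic limit $\tau(\bx^{(\bC)})$ — the latter because dropping no variable makes the corrective covariance term vanish, so Theorem \ref{thm:cf_consistency_corrected} applies with limit $\E[\tau(\bX^{(\bC)}) \mid \bX] = \tau(\bx^{(\bC)})$ — whence their squared difference converges pointwise in probability to $0$ and $C_n \overset{p}{\longrightarrow} 0$. Assembling the three limits with Slutsky and the continuous mapping theorem as described above completes the proof.
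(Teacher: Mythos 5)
Your proof is correct and follows essentially the same route as the paper's: both reduce the problem to the pointwise consistency of the forests (Theorems \ref{thm:cf_consistency} and \ref{thm:cf_consistency_corrected}), use the truncation of Specification \ref{Spec:truncated} to pass from convergence in probability to convergence of expectations over the independent copy $\Dn'$, and conclude with Slutsky's lemma using $\V[\tau(\bX^{(\bC)})]>0$ from Assumption \ref{A:tau}. Your packaging of the limit-interchange step as a single bounded averaging lemma is a cleaner organization than the paper's term-by-term treatment (which handles the denominator via a separate law-of-total-variance computation and only displays the numerator argument in the case where its limit is zero), but it is not a genuinely different method.
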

Theorem \ref{thm:vimp_consistency} states that the introduced variable importance algorithm gets arbitrarily close to the true theoretical value, provided that the sample size is large enough. Combining this result with Proposition \ref{prop_VI}, we get that, for $j \notin \bC$, $\VIhat \overset{p}{\longrightarrow} 0$, which means that the variables not involved in the treatment heterogeneity by construction get a null importance. 
Finally, we conclude our theoretical analysis with a focus on the corrective term of the retrained causal forests. In particular, we quantify the positive asymptotic bias introduced in the importance measure without this correction. We thus denote by $\smash{\mathcal{I}_n^{(j)}}$ the estimated importance measure following the same procedure as for $\smash{\VIhat}$, except that the corrected forest $\smash{\theta_{M,n}^{(-j)}(\bx)}$ is replaced by the raw retrained forest $\smash{\tau_{M,n}^{(-j)}(\bx)}$.
\begin{theorem} \label{thm:vimp_consistency_bias}
    Under the same assumptions than Theorem \ref{thm:cf_consistency_corrected}, with $\mathcal{I}_n^{(j)}$ the importance measure estimated without the corrective term in the causal forests, we have for all $j \in \bC$,
    \begin{align*}
        \mathcal{I}_n^{(j)} \overset{p}{\longrightarrow} \VI 
        + \frac{1}{\V[\tau(\bX^{(\bC)})]} \E\Big[ \frac{\mathrm{Cov}[\tau(\bX^{(\bC)}), \pi(\bX)(1 - \pi(\bX)) \mid \bX^{(-j)}]^2}{\E[\pi(\bX)(1 - \pi(\bX)) \mid \bX^{(-j)}]^2} \Big].
    \end{align*}
\end{theorem}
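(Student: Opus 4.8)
The plan is to run the same asymptotic scheme as in the proof of Theorem \ref{thm:vimp_consistency}, the only change being that the corrected forest $\theta_{M,n}^{(-j)}$ is now replaced by the raw retrained forest $\tau_{M,n}^{(-j)}$, whose probability limit is no longer $\E[\tau(\bX^{(\bC)}) \mid \bX^{(-j)}]$ but carries the bias isolated by Proposition \ref{prop:local_moment_j}. First I would identify this limit. By the generalized-forest consistency argument underlying Theorem \ref{thm:cf_consistency}, the forest $\tau_{M,n}^{(-j)}(\bx)$ fit on the centered data $\Dn^{\star(-j)}$ converges in probability to the solution of its own defining moment equation, i.e.\ to $\mathrm{Cov}[W - \pi(\bX), Y - m(\bX) \mid \bX^{(-j)} = \bx^{(-j)}] / \V[W - \pi(\bX) \mid \bX^{(-j)} = \bx^{(-j)}]$. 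Since $\E[W - \pi(\bX) \mid \bX] = 0$ and $W$ is binary, the tower property gives $\V[W - \pi(\bX) \mid \bX^{(-j)}] = \E[\pi(\bX)(1 - \pi(\bX)) \mid \bX^{(-j)}]$; substituting the identity of Proposition \ref{prop:local_moment_j} into the numerator yields
\begin{align*}
    \tau_{M,n}^{(-j)}(\bx) \overset{p}{\longrightarrow} \E[\tau(\bX^{(\bC)}) \mid \bX^{(-j)} = \bx^{(-j)}] + b(\bx^{(-j)}),
\end{align*}
where
\begin{align*}
    b(\bx^{(-j)}) = \frac{\mathrm{Cov}[\tau(\bX^{(\bC)}), \pi(\bX)(1 - \pi(\bX)) \mid \bX^{(-j)} = \bx^{(-j)}]}{\E[\pi(\bX)(1 - \pi(\bX)) \mid \bX^{(-j)} = \bx^{(-j)}]}.
\end{align*}

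Next I would transfer this pointwise convergence to the empirical averages defining $\mathcal{I}_n^{(j)}$. Using Specification \ref{Spec:truncated}, the forest estimates are uniformly bounded by $K$, so dominated convergence together with the law of large numbers over the independent copy $\Dn'$ gives that the numerator of the first term converges to $\E[(\tau(\bX^{(\bC)}) - \E[\tau(\bX^{(\bC)}) \mid \bX^{(-j)}] - b(\bX^{(-j)}))^2]$, the denominator to $\V[\tau(\bX^{(\bC)})]$, and $\mathrm{I}_n^{(0)} \to 0$ exactly as in Theorem \ref{thm:vimp_consistency}.

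The key algebraic step is then to expand the square. Writing $A = \tau(\bX^{(\bC)}) - \E[\tau(\bX^{(\bC)}) \mid \bX^{(-j)}]$ and $B = b(\bX^{(-j)})$, we have $\E[(A - B)^2] = \E[A^2] - 2\E[AB] + \E[B^2]$. Because $B$ is $\bX^{(-j)}$-measurable while $\E[A \mid \bX^{(-j)}] = 0$, the cross term vanishes by conditioning on $\bX^{(-j)}$, so $\E[(A - B)^2] = \E[A^2] + \E[B^2]$. Dividing by $\V[\tau(\bX^{(\bC)})]$ and recalling that $\E[A^2]/\V[\tau(\bX^{(\bC)})] = \VI$ produces the claimed limit, the extra nonnegative term being $\E[B^2]/\V[\tau(\bX^{(\bC)})]$, which is precisely the stated expectation.

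The main obstacle is the first step: rigorously establishing that the uncorrected forest converges to the \emph{biased} limit rather than to $\E[\tau(\bX^{(\bC)}) \mid \bX^{(-j)}]$. This amounts to verifying that the generalized-forest identifiability and regularity conditions hold for the moment equation posed on the reduced inputs $\bX^{(-j)}$ (uniqueness of its solution, Lipschitz regularity of the involved conditional moments), and then carrying the local-centering identity of Proposition \ref{prop:local_moment_j} through \emph{without} the vanishing-covariance hypothesis that was used in Theorems \ref{thm:cf_consistency_centered} and \ref{thm:cf_consistency_corrected}. Once this limit is pinned down, the remaining orthogonal decomposition is routine.
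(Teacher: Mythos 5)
Your proposal is correct and follows essentially the same route as the paper: identify the probability limit of the uncorrected retrained forest as the solution of its own moment equation on $\bX^{(-j)}$ (which, via Proposition \ref{prop:local_moment_j}, equals $\E[\tau(\bX^{(\bC)}) \mid \bX^{(-j)}]$ plus the covariance-over-variance bias term), transfer this to the empirical averages exactly as in Theorem \ref{thm:vimp_consistency}, and conclude by the orthogonal decomposition in which the cross term vanishes by conditioning on $\bX^{(-j)}$. Your explicit remark that $\V[W - \pi(\bX) \mid \bX^{(-j)}] = \E[\pi(\bX)(1-\pi(\bX)) \mid \bX^{(-j)}]$ is a useful detail the paper leaves implicit when matching its limit to the stated denominator.
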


\section{Experiments} \label{sec:xp}

We assess the performance of the introduced algorithm through three batches of experiments. First, we use simulated data, where the theoretical importance values are known by construction, to compare our algorithm to the existing competitors. Secondly, we test our procedure with the semi-synthetic cases of the ACIC data challenge $2019$, where the variables involved in the heterogeneity are known, but not the importance value. Finally, we present cases with real data to show examples of an analysis conducted with our procedure.
Our approach is compared to the importance of the \texttt{grf} package  and TE-VIM, the double robust approach of \citet{hines2022variable}. For TE-VIM, any learning method can be used, and we report the performance of GAM models, which outperform regression forests in the presented experiments. When reading the results, recall that TE-VIM targets the same theoretical quantities $\VI$ as our algorithm, whereas the \texttt{grf} importance is the frequency of variable occurrence in tree splits. 
Besides, the algorithm of \citet{boileau2022flexible} is designed for high dimensional cases and linear treatment effects, and is thus not appropriate to our goal of precisely quantifying variable importance in non-linear settings. The implementation of our variable importance algorithm is available online at \url{https://gitlab.com/cbenard/grf-vimp}, along with the code to reproduce experiments with simulated data.

\subsection{Simulated Data}

\paragraph{Experiment 1.}
We consider a first example of simulated data to highlight the good performance of the proposed importance measure.
The input is of dimension $p = 8$, and is defined by $\bX \sim \mathcal{N}(\mathbf{0}, \Sigma)$, with $\Sigma$ the identity matrix except that $\mathrm{Cov}(X^{(1)}, X^{(5)}) = 0.9$. The treatment assignment is given by $W \sim \mathrm{Bernouilli}(0.4 + 0.2\mathds{1}_{X^{(1)} > 0})$, and the response $Y$ follows
\begin{align} \label{eq:model_xp1}
    Y = \big(X^{(1)}\mathds{1}_{X^{(1)} > 0} + 0.6X^{(2)}\mathds{1}_{X^{(2)} > 0}\big) \times W + (X^{(3)} \times X^{(4)})^2 + \varepsilon,
\end{align}
where $\varepsilon \sim \mathcal{N}(0, 0.1)$.
In practice, we take a sample size $n = 3000$, and the causal forest is fit with the default number of trees $M = 2000$. Notice that the ratio $\V[\tau(\bX^{(\bC)})]/\V[Y]$ is about $5\%$ in this setting, because of the high variance of the term $(X^{(3)} \times X^{(4)})^2$. Such a quite small ratio is realistic, and makes the treatment effect quite difficult to estimate in practice.  Here, both $X^{(1)}$ and $X^{(2)}$ are involved in heterogeneity, i.e. $\bC = \{1,2\}$, but only $X^{(1)}$ is also a confounder.  Results are averaged over $10$ repetitions, and are reported in Table \ref{table:xp_sim_1} ($30$ repetition for grf-vimp to stabilize the ranking). Additionally, the standard deviation of the mean importance for each variable is displayed in brackets, except for negligible values ($<0.005$). The first column of Table \ref{table:xp_sim_1} is the oracle importance value, precisely estimated using equation (\ref{eq:th_vimp}), the closed-form of $\tau$ given by equation (\ref{eq:model_xp1}), and a Monte-Carlo method with a large sample drawn from the joint distribution of $(Y,W,\bX)$, known by construction.
\begin{table}[h]
\renewcommand{\arraystretch}{0.9}
\centering
\begin{tabular}{|l |l |}
  \hline \hline
  \multicolumn{2}{|c|}{$\mathrm{I}$} \\
  \hline
  $\textcolor{blue}{X^{(2)}}$ & 0.26 \\
  $\textcolor{green}{X^{(1)}}$ & 0.18 \\
  $X^{(3)}$ & 0 \\
  $X^{(4)}$ & 0 \\
  $X^{(5)}$ & 0 \\
  $X^{(6)}$ & 0 \\
  $X^{(7)}$ & 0 \\
  $X^{(8)}$ & 0 \\
  \hline \hline
\end{tabular}
\begin{tabular}{|l |l |}
  \hline \hline
  \multicolumn{2}{|c|}{$\mathrm{I}_n$} \\
  \hline
  $\textcolor{blue}{X^{(2)}}$ & 0.23 \tiny{(0.02)} \\
  $\textcolor{green}{X^{(1)}}$ & 0.19 \tiny{(0.01)} \\
  $X^{(4)}$ & 0.04 \tiny{(0.01)}\\
  $X^{(3)}$ & 0.03 \tiny{(0.01)}\\
  $X^{(5)}$ & 0.004 \\
  $X^{(6)}$ & 0.001 \\
  $X^{(7)}$ & 0.001 \\
  $X^{(8)}$ & 0.001 \\
  \hline \hline
\end{tabular}
\begin{tabular}{|l |l |}
  \hline \hline
  \multicolumn{2}{|c|}{TE-VIM} \\
  \hline
  $\textcolor{green}{X^{(1)}}$ & 0.42 \tiny{(0.07)} \\
  $\textcolor{blue}{X^{(2)}}$ & 0.40 \tiny{(0.08)} \\
  $X^{(4)}$ & 0.19 \tiny{(0.32)} \\
  $X^{(8)}$ & 0.14 \tiny{(0.16)} \\
  $X^{(5)}$ & 0.14 \tiny{(0.15)} \\
  $X^{(3)}$ & 0.12 \tiny{(0.19)} \\
  $X^{(6)}$ & 0.05 \tiny{(0.15)} \\
  $X^{(7)}$ & -0.01 \tiny{(0.17)} \\
  \hline \hline
\end{tabular}
\begin{tabular}{|l |l |}
  \hline \hline
  \multicolumn{2}{|c|}{grf-vimp} \\
  \hline
  $\textcolor{green}{X^{(1)}}$ & 0.49 \tiny{(0.02)}\\
  $X^{(3)}$ & 0.13 \tiny{(0.01)}\\
  $X^{(4)}$ & 0.12 \tiny{(0.01)}\\
  $X^{(5)}$ & 0.11 \tiny{(0.01)}\\
  $\textcolor{blue}{X^{(2)}}$ & 0.10 \tiny{(0.01)}\\
  $X^{(6)}$ & 0.02 \\
  $X^{(7)}$ & 0.02 \\
  $X^{(8)}$ & 0.02 \\
  \hline \hline
\end{tabular}
\caption{Variable importance ranking of Experiment $1$ for $\VIhat$, the importance measure of \texttt{grf} package, and TE-VIM. Standard deviations are displayed in brackets when greater than $0.005$.}
\label{table:xp_sim_1}
\end{table}

The results displayed in Table \ref{table:xp_sim_1} show that our algorithm is the only one to provide the accurate variable ranking, where $X^{(2)}$ is the most important variable, and $X^{(1)}$ the second most important one. TE-VIM accurately identifies these two variables as the most influential, with a similar importance. On the other hand, the importance measure from the \texttt{grf} package underestimates the importance of variable $X^{(2)}$, and identifies $X^{(3)}$, $X^{(4)}$, and $X^{(5)}$ as slightly more important than $X^{(2)}$, although these three variables are not involved in the treatment heterogeneity by construction. In particular, $X^{(5)}$ is not involved at all in the response $Y$, but is strongly correlated to the influential input $X^{(1)}$. Because of this dependence, $X^{(5)}$ is frequently used in the causal forests splits, leading to this quite high importance given by the \texttt{grf} package. On the other hand, $\smash{\VIhat}$ gives an importance close to $0$ for $X^{(5)}$. This result is expected, since the removal of $X^{(5)}$ does not lead to any loss of information regarding the treatment heterogeneity, by definition. An additional interesting phenomenon is the non-negligible importance for variables $X^{(3)}$ and $X^{(4)}$ given by all procedures. In fact, the interaction term in the baseline function $\mu$, which takes the form of a squared product, is rather difficult to estimate by regression forests. Then, the local centering of $Y$ is only partial, and $X^{(3)}$ and $X^{(4)}$ still have impact on the variance of treatment estimates. 
Besides, notice that the corrective term of equation (\ref{eq:correction}) is negligible in this experiment, and that using the original causal forest retrained with one variable removed, gives the same result as in Table \ref{table:xp_sim_1} for $\smash{\VIhat}$, up to the displayed digits.

\paragraph{Experiment 2.}
This second experiment has the same setting than Experiment $1$, except that variable $X^{(1)}$ is only a confounder and is not involved in the treatment effect heterogeneity anymore. Now, the response writes
\begin{align*}
    Y = \big(0.6X^{(2)}\mathds{1}_{X^{(2)} > 0}\big) \times W + X^{(1)}\mathds{1}_{X^{(1)} > 0} + (X^{(3)} \times X^{(4)})^2 + \varepsilon.
\end{align*}
The results are provided in Table \ref{table:xp_sim_2}.
Clearly, $\VIhat$ outperforms the competitors. Indeed, $X^{(2)}$ is well-identified by $\VIhat$ as responsible for most of the heterogeneity of the treatment effect, whereas TE-VIM is strongly biased, and the importance procedure of the \texttt{grf} package outputs quite close values for $X^{(2)}$, $X^{(4)}$, and $X^{(3)}$. As expected, the importance of these last two variables is relatively larger than in Experiment $1$, since the ratio $\V[\tau(\bX^{(\bC)})]/\V[Y]$ drops to $1\%$ in this case.
\begin{table}[h]
\renewcommand{\arraystretch}{0.9}
\centering
\begin{tabular}{|l |l |}
  \hline \hline
  \multicolumn{2}{|c|}{$\mathrm{I}$} \\
  \hline
  $\textcolor{blue}{X^{(2)}}$ & 1 \\
  $X^{(1)}$ & 0 \\
  $X^{(3)}$ & 0 \\
  $X^{(4)}$ & 0 \\
  $X^{(5)}$ & 0 \\
  $X^{(6)}$ & 0 \\
  $X^{(7)}$ & 0 \\
  $X^{(8)}$ & 0 \\
  \hline \hline
\end{tabular}
\hspace*{3mm}
\begin{tabular}{|l |l |}
  \hline \hline
  \multicolumn{2}{|c|}{$\mathrm{I}_n$} \\
  \hline
  $\textcolor{blue}{X^{(2)}}$ & 0.89 \tiny{(0.04)} \\
  $X^{(3)}$ & 0.13 \tiny{(0.03)}\\
  $X^{(4)}$ & 0.13 \tiny{(0.03)}\\
  $X^{(1)}$ & 0.003 \\
  $X^{(5)}$ & 0.003 \\
  $X^{(6)}$ & 0.004 \\
  $X^{(7)}$ & 0.004 \\
  $X^{(8)}$ & 0.006 \\
  \hline \hline
\end{tabular}
\begin{tabular}{|c |l |}
  \hline \hline
  \multicolumn{2}{|c|}{TE-VIM} \\
  \hline
  $\textcolor{blue}{X^{(2)}}$ & 1.76 \tiny{(0.11)} \\
  $X^{(4)}$ & 1.65 \tiny{(0.04)} \\
  $X^{(3)}$ & 1.03 \tiny{(0.02)} \\
  $X^{(8)}$ & 0.99 \\
  $X^{(1)}$ & 0.96 \tiny{(0.02)} \\
  $X^{(5)}$ & 0.88 \tiny{(0.02)} \\
  $X^{(6)}$ & 0.71 \tiny{(0.03)} \\
  $X^{(7)}$ & 0.57 \tiny{(0.04)} \\
  \hline \hline
\end{tabular}
\begin{tabular}{|l |l |}
  \hline \hline
  \multicolumn{2}{|c|}{grf-vimp} \\
  \hline
  $\textcolor{blue}{X^{(2)}}$ & 0.36 \tiny{(0.01)}\\
  $X^{(4)}$ & 0.24 \tiny{(0.01)}\\
  $X^{(3)}$ & 0.23 \tiny{(0.01)}\\
  $X^{(1)}$ & 0.03 \\
  $X^{(5)}$ & 0.03 \\
  $X^{(6)}$ & 0.03 \\
  $X^{(7)}$ & 0.03 \\
  $X^{(8)}$ & 0.03 \\
  \hline \hline
\end{tabular}
\caption{Variable importance ranking of Experiment $2$ for $\VIhat$, the importance measure of \texttt{grf} package, and TE-VIM. Standard deviations are displayed in brackets when greater than $0.005$.}
\label{table:xp_sim_2}
\end{table}

\paragraph{Experiment 3.}
The goal of this third simulated experiment is to highlight a case where the corrective term in the retrained causal forest has a strong influence, as opposed to Experiments $1$ and $2$. We consider $p=5$ inputs uniformly distributed over $[0,1]$, except $X^{(1)}$ defined as $X^{(1)} = U^3$, where $U \sim \mathcal{U}(0,1)$. The treatment assignment $W$ is a Bernoulli variable defined from $\pi(\bX) = X^{(1)}$, and the response is given by 
\begin{align*}
    Y = 10 X^{(1)}(1 - X^{(1)}) \times W + X^{(2)} + \varepsilon,
\end{align*}
where $\varepsilon \sim \mathcal{N}(0, 0.1)$. We still use $n = 3000$ and $M = 2000$ trees in the causal forests. Next, we compute our importance measure $\smash{\VIhat}$ for all inputs, as well as its counterpart $\smash{\mathcal{I}_n^{(j)}}$, where the corrective term is removed, and with $10$ repetitions for uncertainties. Results are reported in Table \ref{table:xp_sim_3}, and clearly show the high bias of the importance of $\smash{X^{(1)}}$ when the corrective term in the retrained forest is removed. Indeed, we get $\smash{\mathcal{I}_n^{(1)} = 1.57}$, whereas the target quantity is $\smash{\mathrm{I}^{(1)} = 1}$, since $\smash{X^{(1)}}$ is the only variable involved in the treatment effect heterogeneity and $\smash{X^{(1)}}$ is independent of the other inputs. With the correction, we recover an importance value of $0.98$ for $\smash{X^{(1)}}$ as expected. Notice that the asymptotic bias exhibited in Theorem \ref{thm:vimp_consistency_bias} takes values $0.72$ for this case, which explains the empirical results. Importantly, this bias takes small values in practice in most cases. Here, we take the treatment effect as $\smash{\tau(\bX^{(\bC)}) = 10 \pi(\bX)(1 - \pi(\bX))}$ to maximize the covariance term involved in the bias of Theorem \ref{thm:vimp_consistency_bias}.
\begin{table}[h]
\renewcommand{\arraystretch}{0.9}
\centering
\begin{tabular}{|l |l |}
  \hline \hline
  \multicolumn{2}{|c|}{$\mathrm{I}_n$} \\
  \hline
  $\textcolor{blue}{X^{(1)}}$ & $0.98$ \tiny{(0.002)} \\
  $X^{(2)}$ & $0.0003$ \\
  $X^{(3)}$ & $0.001$ \\
  $X^{(4)}$ & $0.0002$ \\
  $X^{(5)}$ & $0.0002$ \\
  \hline \hline
\end{tabular}
\begin{tabular}{|c |l |}
  \hline \hline
  \multicolumn{2}{|c|}{$\mathcal{I}_n$} \\
  \hline
  $\textcolor{blue}{X^{(1)}}$ & $1.57$ \tiny{(0.01)} \\
  $X^{(2)}$ & $0.001$ \\
  $X^{(3)}$ & $0.001$ \\
  $X^{(4)}$ & $0.002$ \\
  $X^{(5)}$ & $0.001$ \\
  \hline \hline
\end{tabular}
\caption{Variable importance ranking of Experiment $3$ for $\VIhat$ and $\mathcal{I}_n^{(j)}$. Standard deviations are displayed in brackets when greater than $0.001$.}
\label{table:xp_sim_3}
\end{table}


\subsection{ACIC Data Challenge 2019}
We run a second batch of experiments using the data from the ACIC data challenge $2019$ (\url{https://sites.google.com/view/acic2019datachallenge/data-challenge}), where the goal was to estimate ATEs in various settings. The input data is taken from real datasets available online on the UCI repository. Next, outcomes are simulated with different scenarios, and the associated code scripts were released after the challenge. Since the data generating mechanism is available, we have access to the variables involved in the heterogeneous treatment effect. In each scenario, a hundred datasets were randomly sampled.

\begin{table}[h]
\centering
\begin{tabular}{|l |l |}
  \hline \hline
  \multicolumn{2}{|c|}{$\mathrm{I}_n^{(j)}$} \\
  \hline
  $\textcolor{blue}{X^{(3)}}$ & 0.82 \tiny{$(0.04)$} \\
  $X^{(27)}$ & 0.009 \tiny{$(0.009)$}\\
  $X^{(29)}$ & 0.008 \tiny{$(0.003)$}\\
  $X^{(12)}$ & 0.007 \tiny{$(0.005)$}\\
  $X^{(14)}$ & 0.005 \tiny{$(0.004)$}\\
  \hline \hline
\end{tabular}
\begin{tabular}{|l |l |}
  \hline \hline
  \multicolumn{2}{|c|}{grf-vimp} \\
  \hline
  $\textcolor{blue}{X^{(3)}}$ & 0.45 \tiny{$(0.04)$} \\
  $X^{(29)}$ & 0.06 \tiny{$(0.008)$}\\
  $X^{(27)}$ & 0.03 \tiny{$(0.008)$}\\
  $X^{(7)}$ & 0.03 \tiny{$(0.005)$}\\
  $X^{(28)}$ & 0.03 \tiny{$(0.002)$}\\
  \hline \hline
\end{tabular}
\caption{Top $5$ variables for ``Student performance 2 (Scenario $4$)'' dataset using $\VIhat$ and the importance measure of \texttt{grf} package. Standard deviations are displayed in brackets.}
\label{table:xp_ACIC_student}
\end{table}
We first use the ``student performance 2'' data with $31$ input variables, considering Scenario $4$ defined in the ACIC challenge, involving heterogeneity of the treatment effect with respect to $X^{(3)}$. Each dataset is of size $n = 649$, and we run $10$ repetitions with independent datasets for uncertainties. Table \ref{table:xp_ACIC_student} gives the top $5$ variables ranked by $\VIhat$, which accurately identifies $X^{(3)}$ as the only variable involved in the treatment heterogeneity, since other variables all have a negligible importance value. The \texttt{grf} importance measure also identifies $X^{(3)}$ as the most important variable. However, the importance of many irrelevant variables is not negligible, as opposed to $\VIhat$.

Secondly, we use the ``spam email'' data, made of $22$ input variables. We also consider Scenario $4$, where variables $X^{(8)}$ and $X^{(19)}$ are involved in the heterogeneous treatment effect. In this case, we merge $20$ datasets to get a quite large sample of size $n = 10 000$, and run $5$ repetitions to compute standard deviations. The two relevant variables are properly identified as the most important ones by the two tested algorithms, as shown in Table \ref{table:xp_ACIC_spam}.  Again, the \texttt{grf} importance gives slightly higher values to irrelevant variables than $\smash{\VIhat}$. Notice that the impact of $X^{(19)}$ on heterogeneity is really small, and if we use only few datasets of size  $n = 500$ in the forest training, $X^{(19)}$ is not identified as more important than noisy variables. Thus, a large sample size is required to detect its influence, and therefore we use $n = 10000$.
\begin{table}[h]
\centering
\begin{tabular}{|l |l |}
  \hline \hline
  \multicolumn{2}{|c|}{$\mathrm{I}_n$} \\
  \hline
  $\textcolor{blue}{X^{(8)}}$ & 0.83 \tiny{$(0.001)$} \\
  $\textcolor{blue}{X^{(19)}}$ & 0.011 \tiny{$(0.002)$} \\
  $X^{(22)}$ & 0.003 \tiny{$(4.10^{-4})$}\\
  $X^{(12)}$ & 0.002 \tiny{$(4.10^{-4})$}\\
  $X^{(15)}$ & 0.001 \tiny{$(3.10^{-4})$}\\
  $X^{(17)}$ & 0.0004 \tiny{$(< 10^{-4})$}\\
  \hline \hline
\end{tabular}
\begin{tabular}{|l |l |}
  \hline \hline
  \multicolumn{2}{|c|}{grf-vimp} \\
  \hline
  $\textcolor{blue}{X^{(8)}}$ & 0.85 \tiny{$(4.10^{-3})$} \\
  $\textcolor{blue}{X^{(19)}}$ & 0.064 \tiny{$(6.10^{-3})$} \\
  $X^{(1)}$ & 0.013 \tiny{$(3.10^{-3})$}\\
  $X^{(22)}$ & 0.013 \tiny{$(1.10^{-3})$}\\
  $X^{(15)}$ & 0.010 \tiny{$(8.10^{-4})$}\\
  $X^{(17)}$ & 0.009 \tiny{$(2.10^{-3})$}\\
  \hline \hline
\end{tabular}
\caption{Top $6$ variables for ``Spam email (Scenario $4$)'' dataset using $\VIhat$ and the importance measure of \texttt{grf} package. Standard deviations are displayed in brackets.}
\label{table:xp_ACIC_spam}
\end{table}

\subsection{Real data}

\paragraph{Welfare data.}
For a first experiment with real data, we use the ``Welfare'' dataset from a GSS survey, introduced in \citet{green2012modeling} and available at \url{https://github.com/gsbDBI/ExperimentData}. The goal of this survey is to analyze the impact of question wording about the support of Americans to the government welfare spending. Respondents are randomly assigned one of two possible questions, with the same introduction and response options, but using the phrasing ``welfare'' or ``assistance to the poor''. In fact, this slight wording difference has a quite strong impact on the survey answers, and defines the treatment. The output of interest indicates if respondents have answered that ``too much'' is spent. Our objective is to identify the main characteristics of individuals that have an impact on the heterogeneity of the treatment effect.
We take the dataset from the tutorial available at \url{https://gsbdbi.github.io/ml_tutorial/hte_tutorial/hte_tutorial.html}, of size $n = 13198$ and with $p = 31$ input variables, where basic data preparation steps were used to drop rows with missing values. Notice that we consider the same data to enable comparisons, but that imputing missing values may improve estimates. We leave this topic for future work, as handling missing values for variable importance is of high practical interest.

Table \ref{table:welfare} displays the top $10$ most important variables for Welfare data using our algorithm $\mathrm{I}_n$ and also the importance from the \texttt{grf} package. 
The ranking provided by the two algorithms are close, but $\mathrm{I}_n$ has a clear meaning as the variance proportion of the treatment effect lost when a given variable is removed, whereas grf-vimp can only be used as a relative importance between covariates, without an intrinsic meaning.
\begin{table}[h]
\centering
\begin{tabular}{|c|c|}
  \hline \hline
\multicolumn{2}{|c|}{$\mathrm{I}_n$}  \\ 
  \hline
  polviews & 0.18 \\ 
  partyid & 0.09 \\ 
  hrs1 & 0.04 \\ 
  indus80 & 0.03 \\ 
  maeduc & 0.02 \\ 
  educ & 0.02 \\
  marital & 0.01 \\ 
  age & 0.01 \\ 
  occ80 & 0.01 \\ 
  reg16 & 0.01 \\ 
   \hline \hline
\end{tabular}
\begin{tabular}{|c|c|}
  \hline \hline
\multicolumn{2}{|c|}{grf-vimp} \\ 
  \hline
  polviews & 0.31 \\ 
  partyid & 0.17 \\ 
  educ & 0.09 \\ 
  indus80 & 0.07 \\ 
  hrs1 & 0.07 \\ 
  marital & 0.04 \\ 
  degree & 0.04 \\ 
  maeduc & 0.04 \\ 
  occ80 & 0.02 \\ 
  age & 0.02 \\ 
   \hline \hline
\end{tabular}
\caption{Top $10$ most important variables with respect to $\mathrm{I}_n$ and grf-vimp for Welfare data.}
\label{table:welfare}
\end{table}

Notice that the sum of the importance of all input variables, i.e. $\sum_j \VIhat$, adds to $0.45$, which is far from $1$. Indeed, when inputs are independent, we have $\smash{\sum_j \VI \geq 1}$. 
Such a low value is explained by the correlation within input variables. We run a simple hierarchical clustering of the input variables in $10$ groups based on correlation, to enforce a small correlation between these groups. Then, we run the group variable importance $\smash{\mathrm{I}_n^{(J)}}$ for each group of variables $J \subset \{1,\hdots,p\}$. The results are displayed in the following Table \ref{table:welfare_group}, and are quite straightforward to read. Indeed, half of the treatment heterogeneity is explained by political orientations of individuals, almost a quarter of the heterogeneity is given by variables mostly related to education and degrees. Then, several groups have a small impact, especially a group about income and working status, and a second one about family information. 
\begin{table}[h]
\centering
\begin{tabular}{|c|c|}
  \hline \hline
  Variable group & $I_n^{(J)}$ \\ 
  \hline
  partyid, polviews & 0.51 \\ 
  educ, sibs, occ80, prestg80, maeduc, degree & 0.23 \\ 
  hrs1, income, rincome, wrkstat & 0.07 \\ 
  age, marital, childs, babies & 0.04 \\ 
  wrkslf, indus80, sex & 0.03 \\ 
  reg16, mobile16 & 0.01 \\ 
  race, res16, parborn, born & 0.00 \\ 
  family16 & 0.00 \\ 
  earnrs, hompop, adults & 0.00 \\ 
  preteen, teens & 0.00 \\ 
  \hline \hline
\end{tabular}
\caption{Group variable importance for Welfare data.}
\label{table:welfare_group}
\end{table}

\paragraph{NHEFS health data.}
For the second case study, we use the NHEFS real data about body weight gain following a smoking cessation, extensively described in the causal inference book of \citet{hernan2020causal}. As highlighted in the introduction of Chapter $12$, these data help to answer the question ``what is the average causal effect of smoking cessation on body weight gain?''. According to the authors, the unconfoundedness assumption holds. Here, we go a step further to analyze the heterogeneity of this causal effect with respect to health and personal data of individuals who have stopped smoking, using causal forests and our variable importance algorithm. The data record the weight of individuals, first measured in $1971$, and then in $1982$. The treatment assignment $W$ indicates whether people have stopped smoking during this period, and the observed output $Y$ is the weight difference between $1971$ and $1982$. We take the dataset of size $n = 1566$ used in \citet[Chapter 12]{hernan2020causal}. Notice that $63$ rows with the output missing were removed, introducing a small bias, as discussed by the authors. They include $9$ variables in their analysis, sufficient for unconfoundedness. To better estimate heterogeneity, we also include all variables of the original dataset, that do not contain missing values and are not related to the response, and obtain $p = 41$ input variables. As already mentioned, handling missing values is out of scope of this article, and is left for future work. We run our variable importance algorithm and the grf importance, using $M = 4000$ trees.

The results are displayed in Table \ref{table:nhefs}. Clearly, the original weight of individuals in $1971$ has a strong causal effect on weight gain following smoking cessation, with half of the treatment effect variance lost when this variable is removed. 
The intensity and duration of smoking, as well as personal characteristics, such as height and age are also involved in treatment heterogeneity, according to both algorithms. Notice that \texttt{grf} importance underestimates the importance of wt71 with respect to other variables. Next, we group together variables that are highly correlated, to compute group variable importance. Sex, height, and birth control are highly correlated with the weight in $1971$, and this group explains two third of the treatment effect heterogeneity. In fact, age and smoke years also have a quite strong impact with a quarter of heterogeneity explained.

\begin{table}[h]
\centering
\begin{tabular}{|c|c|}
  \hline \hline
\multicolumn{2}{|c|}{$\mathrm{I}_n$}  \\ 
  \hline
  wt71 & 0.52 \\ 
  smokeyrs & 0.09 \\ 
  smokeintensity & 0.07 \\ 
  ht & 0.06 \\ 
  age & 0.05 \\ 
  alcoholfreq & 0.01 \\ 
  active & 0.01 \\ 
  tumor & 0.01 \\ 
  asthma & 0.01 \\ 
  alcoholtype & 0.01 \\ 
   \hline \hline
\end{tabular}
\begin{tabular}{|c|c|}
  \hline \hline
\multicolumn{2}{|c|}{grf-vimp} \\ 
  \hline
  wt71 & 0.26 \\ 
  smokeyrs & 0.13 \\ 
  age & 0.10 \\ 
  ht & 0.10 \\ 
  smokeintensity & 0.07 \\ 
  school & 0.07 \\ 
  active & 0.03 \\ 
  alcoholfreq & 0.03 \\ 
  chroniccough & 0.02 \\ 
  marital & 0.02 \\ 
   \hline \hline
\end{tabular}
\caption{Top $10$ most important variables with respect to $\mathrm{I}_n$ and grf-vimp for NHEFS data.}
\label{table:nhefs}
\end{table}

\begin{table}[h]
\centering
\begin{tabular}{|c|c|}
  \hline \hline
  Variable group & $I_n^{(J)}$ \\ 
  \hline
  sex, ht, wt71, birthcontrol & 0.67 \\ 
  age, smokeyrs & 0.26 \\ 
  school, education & 0.03 \\ 
  alcoholpy, alcoholfreq, alcoholtype & 0.02 \\ 
  hbp, diabetes, pica, hbpmed, boweltrouble & 0.02 \\ 
  \hline \hline
\end{tabular}
\caption{Group variable importance for NHEFS data.}
\label{table:nhefs_group}
\end{table}

\section{Conclusion}

We introduced a new variable importance algorithm for causal forests, based on the drop and relearn principle, widely used for regression problems. The proposed method has both theoretical and empirical solid groundings. Indeed, we show that our algorithm is consistent, under standard assumptions in the mathematical analysis of random forests. Additionally, we run extensive experiments on simulated, semi-synthetic, and real data, to show the practical efficiency of the method. Notice that the implementation of our variable importance algorithm is available online at \url{https://gitlab.com/cbenard/grf-vimp}.

Let us summarize the main guidelines for practitioners using our variable importance algorithm. First, all confounders must be included in the initial data, as it is always necessary to fulfill the unconfoundedness assumption to obtain consistent estimates. Secondly, it is also recommended to include all variables impacting heterogeneity in the data as well. However, leaving aside a non-confounding variable impacting heterogeneity, does not bias the analysis, as opposed to a missing confounder. Thirdly, practitioners must also keep in mind that adding a large number of irrelevant variables, i.e. non-confounding and not impacting heterogeneity, may hurt the accuracy of causal forests. Finally, it is recommended to group correlated variables together, and then compute group variable importance to get additional relevant insights.

To conclude, we mention two topics of high interest for future work.
First, handling missing values in variable importance algorithms is barely discussed in the literature, but is strongly useful in practice, since observational databases often have missing values, which should be handled carefully to avoid misleading results.
Secondly, developing a testing procedure to detect significantly non-null importance values, would enable to identify the set $\bC$ of variables involved in heterogeneity, an insight of high practical value. The asymptotic normality of causal forests is probably a promising starting point to develop such testing algorithms.


\FloatBarrier

\bibliography{biblio}

\newpage

\appendix

\section{Proofs of Propositions \ref{prop_identifiability}-\ref{prop:local_moment_j} and Theorems \ref{thm:cf_consistency}-\ref{thm:vimp_consistency_bias}} \label{App:proofs}

\begin{proof}[Proof of Proposition \ref{prop_identifiability}]
    Using the observed outcome definition with SUTVA (line $1$), and the unconfoundedness Assumption \ref{A:unconfound} (line $2$ to $3$), we have
    \begin{align*}
        \E[Y \mid \bX, W] & = \E[W Y(1) + (1 - W) Y(0) \mid \bX, W] \\
        & = W \E[Y(1) \mid \bX, W] + (1 - W) \E[Y(0) \mid \bX, W] \\
        & = W \E[Y(1) \mid \bX] + (1 - W) \E[Y(0) \mid \bX] \\
        & = \E[Y(0) \mid \bX] + W (\E[Y(1) \mid \bX] - \E[Y(0) \mid \bX]) \\
        & = \E[Y(0) \mid \bX] + W \E[Y(1) - Y(0) \mid \bX]) \\
        & = \E[\mu(\bX) + \varepsilon(0) \mid \bX] + W \E[\tau(\bX^{(\bC)}) + \varepsilon(1) - \varepsilon(0) \mid \bX]) \\
        & = \mu(\bX) + W \tau(\bX^{(\bC)}),
    \end{align*}
    and the final result follows.
\end{proof}

\begin{proof}[Proof of Proposition \ref{prop:heterogeneous_prob}]
    From Assumption \ref{A:tau}, $\bX$ admits a strictly positive density, denoted by $f$. Then, from Definition \ref{Def:tau},
    \begin{align*}
        \P(\tau(\bX^{(\bC)}) \neq \tau(\bX'^{(\bC)})) > \int_{\mathcal{X}_1 \times \mathcal{X}'_1 \times \mathcal{X}_{p-1}} f(x^{(j)}, \bx^{(-j)}) f(x'^{(j)}, \bx^{(-j)}) dx^{(j)} dx'^{(j)} d\bx^{(-j)},
    \end{align*}
    which is strictly positive, since f is strictly positive and $\mathcal{X}_1$, $\mathcal{X}'_1$, and $\mathcal{X}_{p-1}$ have a non-null Lebesgue measure.
\end{proof}

\begin{proof}[Proof of Proposition \ref{prop_VI}]
    Assumption \ref{A:tau} implies that $\V[\tau(\bX^{(\bC)})] > 0$.
    By definition, 
    \begin{align} \label{eq:vimp}
        \VI = \frac{\V[\tau(\bX^{(\bC)})] - \V[\E[\tau(\bX^{(\bC)})|\bX^{(-j)}]]}{\V[\tau(\bX^{(\bC)})]},
    \end{align}
    which also writes using the law of total variance
    \begin{align} \label{eq:total_var}
        \VI = \frac{\E[\V[\tau(\bX^{(\bC)})|\bX^{(-j)}]]}{\V[\tau(\bX^{(\bC)})]}
        = \frac{\E[(\tau(\bX^{(\bC)}) - E[\tau(\bX^{(\bC)})|\bX^{(-j)}])^2]}{\V[\tau(\bX^{(\bC)})]}.
    \end{align}
    If $j \notin \bC$, we clearly have $E[\tau(\bX^{(\bC)})|\bX^{(-j)}] = \tau(\bX^{(\bC)})$, and then equation (\ref{eq:total_var}) gives that $\VI = 0$.

    We now consider the case where $j \in \bC$. 
    First, since $\V[\E[\tau(\bX^{(\bC)})|\bX^{(-j)}]] \geq 0$, we directly get that $\VI \leq 1$ from equation (\ref{eq:vimp}).
    Secondly, from Definition \ref{Def:tau}, for $\bx^{(-j)} \in \mathcal{X}_{p-1}$, the function $x^{(j)} \to \tau(x^{(j)}, \bx^{(-j)})$ takes different values over $\mathcal{X}_1$ and $\mathcal{X}'_1$, and therefore $(\tau(\bX^{(\bC)}) - E[\tau(\bX^{(\bC)})|\bX^{(-j)}])^2 > 0$ with a positive probability, since $\mathcal{X}_1$, $\mathcal{X}'_1$, and $\mathcal{X}_{p-1}$ have a non-null Lebesgue measure. It implies that $\VI > 0$.
\end{proof}

\begin{proof}[Proof of Proposition \ref{prop:local_moment_C}]
    We first expand the covariance term
    \begin{align*}
        \mathrm{Cov}[&W - \pi(\bX), Y - m(\bX) \mid \bX^{(\bC)}] \\
        &= \E[(W - \pi(\bX))(Y - m(\bX)) \mid \bX^{(\bC)}] - \E[W - \pi(\bX)\mid \bX^{(\bC)}]\E[Y - m(\bX) \mid \bX^{(\bC)}].
    \end{align*}
    Notice that the second term is null since $\E[Y - m(\bX) \mid \bX^{(\bC)}] = \E[\E[Y - m(\bX) \mid \bX] \mid \bX^{(\bC)}] = 0$. Additionally, by definition,
    \begin{align*}
        m(\bX) = \E[Y \mid \bX] & = \E[\mu(\bX) + \tau(\bX^{(\bC)}) \times W + \varepsilon(W) \mid \bX] = \mu(\bX) + \tau(\bX^{(\bC)}) \pi(\bX),
    \end{align*}
    then $Y - m(\bX) = (W - \pi(\bX)) \tau(\bX^{(\bC)}) + \varepsilon(W)$, and we get
    \begin{align*}
        \mathrm{Cov}[W - \pi(\bX)&, Y - m(\bX) \mid \bX^{(\bC)}] \\
        &= \E[(W - \pi(\bX))((W - \pi(\bX)) \tau(\bX^{(\bC)}) + \varepsilon(W)) \mid \bX^{(\bC)}] \\
        &= \tau(\bX^{(\bC)}) \times \E[(W - \pi(\bX))^2 \mid \bX^{(\bC)}] + \E[\varepsilon(W) (W - \pi(\bX)) \mid \bX^{(\bC)}] \\
        &= \tau(\bX^{(\bC)}) \times \E[(W - \pi(\bX))^2 \mid \bX^{(\bC)}] + \E[ (W - \pi(\bX)) \E[\varepsilon(W)  \mid \bX, W] \mid \bX^{(\bC)}]] \\
        &= \tau(\bX^{(\bC)}) \times \V[W - \pi(\bX) \mid \bX^{(\bC)}],
    \end{align*}
    which gives the final local moment equation in $\bX^{(\bC)}$.
\end{proof}

\begin{proof}[Proof of Proposition \ref{prop:local_moment_j}]
    As in the proof of Proposition \ref{prop:local_moment_C}, we obtain
    \begin{align*}
        \mathrm{Cov}[W - \pi(\bX), Y - m(\bX) \mid \bX^{(-j)}] 
            &= \E[\tau(\bX^{(\bC)})(W - \pi(\bX))^2 \mid \bX^{(-j)}].
    \end{align*}
    Notice that
    \begin{align*}
        \mathrm{Cov}[\tau(\bX^{(\bC)}), (W - \pi(\bX))^2 \mid \bX^{(-j)}] = \E[\tau(&\bX^{(\bC)})(W - \pi(\bX))^2 \mid \bX^{(-j)}] \\ &- \E[\tau(\bX^{(\bC)})\mid \bX^{(-j)}]\E[(W - \pi(\bX))^2 \mid \bX^{(-j)}].
    \end{align*}
    Combining the above two equations, we have
    \begin{align*}
        \mathrm{Cov}[W - \pi(\bX), Y - m(\bX) \mid \bX^{(-j)}] 
            =& \mathrm{Cov}[\tau(\bX^{(\bC)}), (W - \pi(\bX))^2 \mid \bX^{(-j)}] \\ &+ \E[\tau(\bX^{(\bC)})\mid \bX^{(-j)}] \times \V[W - \pi(\bX) \mid \bX^{(-j)}],
    \end{align*}
    which gives the final result since 
    \begin{align*}
        \mathrm{Cov}[\tau(\bX^{(\bC)}), (W - \pi(\bX))^2 \mid \bX^{(-j)}] = \mathrm{Cov}[\tau(\bX^{(\bC)}), \pi(\bX)(1 - \pi(\bX)) \mid \bX^{(-j)}].
    \end{align*}
\end{proof}

\begin{proof}[Proof of Theorem \ref{thm:cf_consistency}]
    The result is obtained by applying Theorem $3$ from \citet{athey2019generalized}. The first paragraph of section $3$ of \citet{athey2019generalized} provides conditions to apply Theorem $3$, that are satisfied by our Assumptions \ref{A:X_density} and \ref{A:lipschitz}: $\bX \in [0,1]^p$, $\bX$ admits a density bounded from below and above by strictly positive constants, and $\mu$ and $\tau$ are bounded.
    
    Next, Assumptions 1-6 from \citet{athey2019generalized} must be verified. As stated at the end of Section $6.1$, Assumptions 3-6 always hold for causal forests, the first assumption holds because the functions $m$, $\mu$, and $\tau$ are Lispschitz from our Assumption \ref{A:lipschitz} (the product of Lipschitz functions is Lipschitz), and Assumption $2$ is satisfied because $0 < \V[W \mid \bX] = \pi(\bX)(1 - \pi(\bX)) < 1$ from our Assumption \ref{A:lipschitz}.

    Finally, the forest is grown from Specification \ref{Spec:forests}, and the treatment effect is identified by equation (\ref{eq:local_moment_full}) since Assumption \ref{A:unconfound} enforces unconfoundedness. Overall, we apply Theorem $3$ from \citet{athey2019generalized} to get the consistency of the causal forest estimate, i.e., for $\bx \in [0,1]^p$
    \begin{align*}
        \tau_{M,n}(\bx) \overset{p}{\longrightarrow} \tau(\bx^{(\bC)}).
    \end{align*}

    Notice that Theorem $3$ from \citet{athey2019generalized} states the consistency of generalized forests. As it will be useful for further results, we give below a proof of the weak consistency in the specific case of causal forests, using arguments of \citet{athey2019generalized}. In particular, we take advantage of Specification \ref{Spec:forests}, which enforces the honesty property, and that the diameters of tree cells vanish as the sample size $n$ increases.
    First, in our case of binary treatment $W$, the causal forest estimate writes
    \begin{align*}
        \tau_{M,n}(\bx) = \frac{\sum_{i=1}^n \alpha_i(\bx) W_i Y_i - (\sum_{i=1}^n \alpha_i(\bx) W_i) (\sum_{i=1}^n \alpha_i(\bx) Y_i)}{\sum_{i=1}^n \alpha_i(\bx) W_i^2 - (\sum_{i=1}^n \alpha_i(\bx) W_i)^2},
    \end{align*}
    where the weight $\alpha_i(\bx)$ is defined by equation ($3$) of \citet{athey2019generalized}, as the weight associated to training observation $\bX_i$ to form an estimate at the new query point $\bx$. The weights $\alpha_i(\bx)$ sum to $1$ over all observations, i.e., $\sum_{i=1}^n \alpha_i(\bx) = 1$. Also notice that we alleviate notations of $\alpha_i(\bx)$ throughout the article, but the full expression with all dependencies is $\alpha_i(\bx, \bX_i, \bTheta_M, \Dn)$, where the causal forest is built with data $\Dn$, and trees are randomized with $\bTheta_M$. Now, we denote by $\Delta_{1,n}(\bx) = \sum_{i=1}^n \alpha_i(\bx) W_i Y_i$ the first term of the numerator of $\tau_{M,n}(\bx)$, and derive its convergence. Since the weights sum to $1$,
    \begin{align*}
        \Delta_{1,n}(\bx) - \E[WY \mid \bX = \bx] = \sum_{i=1}^n \alpha_i(\bx) (W_i Y_i - \E[WY \mid \bX = \bx]),
    \end{align*}
    and then,
    \begin{align*}
        \E[\Delta_{1,n}(\bx) - \E[WY \mid \bX = \bx]] = \sum_{i=1}^n \E[\E[\alpha_i(\bx) (W_i Y_i - \E[WY \mid \bX = \bx]) \mid \bX_i]].
    \end{align*}
    Here, we use a key property of the forest growing given by Specification \ref{Spec:forests} : honesty. Indeed, it enforces that $\Dn$ is randomly split in two halves for each tree, where one part is used to build the splits, and the other half to compute the weights. Therefore, $\alpha_i(\bx, \bX_i, \bTheta_M, \Dn)$ and $W_iY_i$ are independent conditional on $\bX_i$, for all $\{i,\hdots,n\}$. Then, we have
    \begin{align*}
        \E[\Delta_{1,n}(\bx) - \E[WY \mid \bX = \bx]] =& \sum_{i=1}^n \E[\E[\alpha_i(\bx)\mid \bX_i] \E[W_i Y_i - \E[WY \mid \bX = \bx] \mid \bX_i]] \\
        =& \sum_{i=1}^n \E[\E[\alpha_i(\bx)\mid \bX_i] (\E[W_i Y_i \mid \bX_i] - \E[WY \mid \bX = \bx])].
    \end{align*}
    Since $W$ and $Y$ are independent conditional on $\bX$ from the unconfoundedness Assumption \ref{A:unconfound}, $\E[W_i Y_i \mid \bX_i] = \E[W_i \mid \bX_i] \E[Y_i \mid \bX_i]$. Additionally, Assumption \ref{A:lipschitz} states that the functions $\pi$ and $m$ are Lipschitz, and since the product of two Lipschitz functions is Lipschitz, $\E[W_i Y_i \mid \bX_i]$ is Lipschitz, with a constant $C > 0$.
    Therefore, we obtain
    \begin{align*}
        \E[\Delta_{1,n}(\bx) - \E[WY \mid \bX = \bx]] 
        \leq& \sum_{i=1}^n \E[\E[\alpha_i(\bx)\mid \bX_i] C \|\bX_i - \bx \|_2] \\
        \leq& C \E\big[ \sum_{i=1}^n \alpha_i(\bx) \|\bX_i - \bx \|_2 \big] \\
        \leq& C \E\big[ \underset{i}{\mathrm{sup}} \|\bX_i - \bx \|_2 \mathds{1}_{\alpha_i(\bx) > 0} \sum_{i=1}^n \alpha_i(\bx) \big] \\
        \leq& C \E\big[ \underset{i}{\mathrm{sup}} \|\bX_i - \bx \|_2 \mathds{1}_{\alpha_i(\bx) > 0} \big].
    \end{align*}
    Since Assumptions \ref{A:X_density} and \ref{A:lipschitz} and Specification \ref{Spec:forests} are satisfied, equation ($26$) in the Supplementary Material of \citet{athey2019generalized} states that
    \begin{align*}
        \E\big[ \underset{i}{\mathrm{sup}} \|\bX_i - \bx \|_2 \mathds{1}_{\alpha_i(\bx) > 0} \big] \longrightarrow 0,
    \end{align*}
    which gives that
    \begin{align} \label{eq:unbiased}
        \E[\Delta_{1,n}(\bx)] \longrightarrow \E[WY \mid \bX = \bx].
    \end{align}
    Next, we use equation (24) in Lemma $7$ of the Supplementary Material of \citet{athey2019generalized}, to get that $\V[\Delta_{1,n}(\bx)] = O(a_n/n)$. Since $a_n/n \longrightarrow 0$ by Specification \ref{Spec:forests}, we finally have $\V[\Delta_{1,n}(\bx)] \longrightarrow 0$.
    Finally, this last limit combined with equation (\ref{eq:unbiased}), states that $\Delta_{1,n}(\bx) - \E[WY \mid \bX = \bx]$ is asymptotically unbiased and of null variance. Using the bias-variance decomposition, we obtain the $\mathbb{L}^2$-consistency of $\Delta_{1,n}(\bx)$ towards $\E[WY \mid \bX = \bx]$, which implies the weak consistency
    \begin{align*}
        \sum_{i=1}^n \alpha_i(\bx) W_i Y_i \overset{p}{\longrightarrow} \E[WY \mid \bX = \bx].
    \end{align*}
    
    Identically, we obtain the weak consistency of the other terms involved in $\tau_{M,n}(\bx)$, i.e., $\sum_{i=1}^n \alpha_i(\bx) W_i \overset{p}{\longrightarrow} \pi(\bx)$, $\sum_{i=1}^n \alpha_i(\bx) Y_i \overset{p}{\longrightarrow} m(\bx)$, and $\sum_{i=1}^n \alpha_i(\bx) W_i^2 \overset{p}{\longrightarrow} \E[W^2 \mid \bX = \bx]$. The continuous mapping theorem gives for the last term that $\big(\sum_{i=1}^n \alpha_i(\bx) W_i\big)^2 \overset{p}{\longrightarrow} \E[W \mid \bX = \bx]^2$.
    Finally, using Slutsky's Lemma, we obtain
    \begin{align*}
        \tau_{M,n}(\bx) \overset{p}{\longrightarrow} &\frac{\E[WY \mid \bX = \bx] - \E[W \mid \bX = \bx]\E[Y \mid \bX = \bx]}{\E[W^2 \mid \bX = \bx] - \E[W \mid \bX = \bx]^2} \\
        &= \frac{\mathrm{Cov}[W, Y \mid \bX = \bx]}{\V[W \mid \bX = \bx]} \\
        &= \tau(\bx^{(\bC)}),
    \end{align*}
    where the last line is given by the local moment equation (\ref{eq:local_moment_full}), which identifies the treatment effect.
    Finally, notice that this proof applies to any linear local moment equation defining a generalized random forest.
\end{proof}

\begin{proof}[Proof of Theorem \ref{thm:cf_consistency_centered}]
    We consider $j \notin \bC$, and follow the same proof as Theorem \ref{thm:cf_consistency}, to show that the causal forest $\tau^{(-j)}_{M,n}(\bx)$ fit with $\Dn^{\star (-j)}$ converges as
    \begin{align*}
        \tau^{(-j)}_{M,n}(\bx) \overset{p}{\longrightarrow} \theta(\bx^{(-j)}),
    \end{align*}
    where $\theta(\bx^{(-j)})$ satisfies the following equation by definition of causal forests,
    \begin{align*}
        \theta(\bx^{(-j)}) \times \V[W - \pi(\bX) \mid \bX^{(-j)} = \bx^{(-j)}] - \mathrm{Cov}[W - \pi(\bX), Y - m(\bX) \mid \bX^{(-j)} = \bx^{(-j)}] = 0.
    \end{align*}
    Then, according to Proposition \ref{prop:local_moment_C}, the above moment equation identifies the treatment effect under Assumptions \ref{A:unconfound} and \ref{A:tau}, and we obtain
    \begin{align*}
        \theta(\bx^{(-j)}) = \tau(\bx^{(\bC)}),
    \end{align*}
     which gives (i). For (ii), we apply the same proof, except that the obtained local moment equation identifies $\E[\tau(\bX^{(\bC)}) \mid \bX^{(-j)} = \bx^{(-j)}]$ according to Proposition \ref{prop:local_moment_j}.
\end{proof}

\begin{proof}[Proof of Theorem \ref{thm:cf_consistency_corrected}]
    With $j \in \{1,\hdots,p\}$, recall that the causal forest $\tau_{M,n}(\bx)$ is fit with a centered dataset $\Dn^{\star}$, and the corrected causal forest estimate $\smash{\theta_{M,n}^{(-j)}(\bx)}$ is fit with $\Dn^{\star (-j)}$, an independent copy of the centered dataset with the $j$-th variable dropped, and is formally defined as
    \begin{align*}
        \theta_{M,n}^{(-j)}(\bx) = \tau_{M,n}^{(-j)}(\bx) - \frac{\sum_{i=1}^n \alpha'_i(\bx^{(-j)}) (W_i - \pi(\bX_i))^{2} \tau_{M,n}(\bX_i) - \overline{W^{2}_{\alpha'}} \overline{\tau}_{\alpha'}}{\sum_{i=1}^n \alpha'_i(\bx^{(-j)}) (W_i - \overline{W}_{\alpha'})^2},
    \end{align*}
    where $\overline{W^{2}_{\alpha'}} = \sum_{i=1}^n \alpha'_i(\bx^{(-j)}) (W_i - \pi(\bX_i))^{2}$, $\overline{\tau}_{\alpha'} = \sum_{i=1}^n \alpha'_i(\bx^{(-j)}) \tau_{M,n}(\bX_i)$, and $\overline{W}_{\alpha'} = \sum_{i=1}^n \alpha'_i(\bx^{(-j)}) (W_i - \pi(\bX_i))$.
    We first prove the convergence of the first term of the numerator,
    \begin{align*}
        \Delta_n &= \sum_{i=1}^n \alpha'_i(\bx^{(-j)}) (W_i - \pi(\bX_i))^{2} \tau_{M,n}(\bX_i) \\
        &= \sum_{i=1}^n \alpha'_i(\bx^{(-j)}) (W_i - \pi(\bX_i))^{2} \tau(\bX_i) + \sum_{i=1}^n \alpha'_i(\bx^{(-j)}) (W_i - \pi(\bX_i))^{2} (\tau_{M,n}(\bX_i) - \tau(\bX_i)).
    \end{align*}
    Using the same proof as for Theorem \ref{thm:cf_consistency}, we get that 
    \begin{align*}
        \sum_{i=1}^n \alpha'_i(\bx^{(-j)}) (W_i - \pi(\bX_i))^{2} \tau(\bX_i) \overset{p}{\longrightarrow} \E[(W - \pi(\bX))^{2} \tau(\bX) \mid \bX = \bx^{(-j)}].
    \end{align*}
    For the second term involved in $\Delta_n$, we cannot directly apply the proof of Theorem \ref{thm:cf_consistency} since the output depends on $n$ through the term $\tau_{M,n}(\bX_i)$.
    We first need to bound $\P(\alpha'_1(\bx^{(-j)}) > 0)$. Let us consider a given tree $\ell \in \{1,\hdots,M\}$, and the associated weights $\alpha'_{i \ell}(\bx^{(-j)})$ for this tree alone. From Specification \ref{Spec:truncated}, we have
    \begin{align*}
        \sum_{i=1}^n \mathds{1}_{\alpha'_{i \ell}(\bx^{(-j)}) > 0} \leq t_0,
    \end{align*}
    where $t_0$ is the maximum number of observations in each terminal leave. Since the weights are identically distributed, we have $n \E[\mathds{1}_{\alpha'_{1 \ell}(\bx^{(-j)}) > 0}] \leq t_0$, i.e., $\P(\alpha'_{1 \ell}(\bx^{(-j)}) > 0) \leq t_0 / n$. Finally, considering all trees, since $\alpha'_{1}(\bx^{(-j)}) = \sum_{\ell=1}^M \alpha'_{1 \ell}(\bx^{(-j)})/M$, we obtain
    \begin{align} \label{eq:weights}
        \P(\alpha'_{1}(\bx^{(-j)}) > 0) \leq \frac{M t_0}{n}.
    \end{align}
    Next, for the second term of $\Delta_n$, we write
    \begin{align*}
       \E[\big[\big|\sum_{i=1}^n \alpha'_i(\bx^{(-j)}) (W_i - \pi(\bX_i))^{2} (\tau_{M,n}(\bX_i) - \tau(\bX_i))\big|\big]
       \leq& \E\big[\sum_{i=1}^n \alpha'_i(\bx^{(-j)}) |\tau_{M,n}(\bX_i) - \tau(\bX_i)|\big] \\
       \leq& n \E\big[ \alpha'_1(\bx^{(-j)}) |\tau_{M,n}(\bX_1) - \tau(\bX_1)|\big].
    \end{align*}
    The right hand side of this inequality writes
    \begin{align*}
         n \E\big[ \alpha'_1(\bx^{(-j)})& |\tau_{M,n}(\bX_1) - \tau(\bX_1)|\big] \\ &=  n \E\big[ \alpha'_1(\bx^{(-j)}) |\tau_{M,n}(\bX_1) - \tau(\bX_1)| \mid  \alpha'_1(\bx^{(-j)}) > 0 \big] \P(\alpha'_{1}(\bx^{(-j)}) > 0) \\
         &\leq M t_0 \E\big[|\tau_{M,n}(\bX_1) - \tau(\bX_1)| \mid \alpha'_1(\bx^{(-j)}) > 0 \big],        
    \end{align*}
    where the last inequality is obtained using (\ref{eq:weights}). Finally, since the original causal forest trained with all inputs and the weights $\alpha'_{1}(\bx^{(-j)})$ of the retrained forest are built using independent data, the conditioning event in $\E\big[|\tau_{M,n}(\bX_1) - \tau(\bX_1)| \mid \alpha'_1(\bx^{(-j)}) > 0 \big]$ only modifies the distribution of $\bX_1$. Therefore, with $\bZ_n$ a random variable following this conditional distribution, we have
    \begin{align*}
        \E\big[|\tau_{M,n}(\bX_1) - \tau(\bX_1)| \mid \alpha'_1(\bx^{(-j)}) > 0 \big] = \E\big[|\tau_{M,n}(\bZ_n) - \tau(\bZ_n)|\big].
    \end{align*}
    Since Theorem \ref{thm:cf_consistency} gives the convergence in probability towards $0$ of $\tau_{M,n}(\bx) - \tau(\bx)$ for all $\bx \in [0,1]$ and $\bZ_n$ is independent from $\tau_{M,n}(\bx)$, we get that $\tau_{M,n}(\bZ_n) - \tau(\bZ_n) \overset{p}{\longrightarrow} 0$.
    Since the causal forest is bounded from Specification \ref{Spec:truncated}, convergence in probability implies $\mathbb{L}^1$-convergence, and we get that 
    \begin{align*}
        \E\big[|\tau_{M,n}(\bX_1) - \tau(\bX_1)| \mid \alpha'_1(\bx^{(-j)}) > 0 \big] = \E\big[|\tau_{M,n}(\bZ_n) - \tau(\bZ_n)|\big] \longrightarrow 0.
    \end{align*}
    This implies the convergence of the second term of $\Delta_n$, and overall, we obtain that
    \begin{align*}
        \Delta_n \overset{p}{\longrightarrow} \E[(W - \pi(\bX))^{2} \tau(\bX) \mid \bX = \bx^{(-j)}].
    \end{align*}
    Next, $\overline{\tau}_{\alpha'}$ is handled similarly as $\Delta_n$, and we follow the same proof as for Theorem \ref{thm:cf_consistency} to get the weak consistency of the remaining terms involved in $\theta_{M,n}^{(-j)}(\bx)$, and using Slutsky's lemma, we obtain
    \begin{align*}
        \frac{\sum_{i=1}^n \alpha'_i(\bx^{(-j)}) (W_i - \pi(\bX_i))^{2} \tau_{M,n}(\bX_i) - \overline{W^{2}_{\alpha'}} \overline{\tau}_{\alpha'}}{\sum_{i=1}^n \alpha'_i(\bx^{(-j)}) (W_i - \overline{W}_{\alpha'})^2}
        \overset{p}{\longrightarrow} \frac{\mathrm{Cov}[\tau(\bX^{(\bC)}), \pi(\bX)(1 - \pi(\bX)) \mid \bX^{(-j)} = \bx^{(-j)}]}{\V[W - \pi(\bX) \mid \bX^{(-j)} = \bx^{(-j)}]}.
    \end{align*}
    Then, following the case (ii) of Theorem \ref{thm:cf_consistency_centered}, we get
    \begin{align*}
        \tau_{M,n}^{(-j)}(\bx) \overset{p}{\longrightarrow} \frac{\mathrm{Cov}[W - \pi(\bX), Y - m(\bX) \mid \bX^{(-j)} = \bx^{(-j)}] }{\V[W - \pi(\bX) \mid \bX^{(-j)} = \bx^{(-j)}]},
    \end{align*}
    which gives the final result
    \begin{align*}
        \theta_{M,n}^{(-j)}(\bx) \overset{p}{\longrightarrow} &\frac{\mathrm{Cov}[W - \pi(\bX), Y - m(\bX) \mid \bX^{(-j)} = \bx^{(-j)}] }{\V[W - \pi(\bX) \mid \bX^{(-j)} = \bx^{(-j)}]} 
        \\ & \quad - \frac{\mathrm{Cov}[\tau(\bX^{(\bC)}), \pi(\bX)(1 - \pi(\bX)) \mid \bX^{(-j)} = \bx^{(-j)}]}{\V[W - \pi(\bX) \mid \bX^{(-j)} = \bx^{(-j)}]}
        \\ &= \E[\tau(\bX^{(\bC)}) \mid \bX^{(-j)} = \bx^{(-j)}],
    \end{align*}
    where the last equality is given by Proposition \ref{prop:local_moment_j}.
\end{proof}

\begin{proof}[Proof of Theorem \ref{thm:vimp_consistency}]
    We first consider the case $j \in \{1,\hdots,p\} \setminus \bC$ for the sake of clarity. We assume that Assumptions \ref{A:unconfound}-\ref{A:lipschitz}, and Specifications \ref{Spec:forests} and \ref{Spec:truncated} are satisfied, and causal forests are trained as specified in Theorem \ref{thm:cf_consistency_corrected}. Then, we can apply Theorems \ref{thm:cf_consistency} and \ref{thm:cf_consistency_corrected} to get that
    \begin{align*}
        \tau_{M,n}(\bX) - \theta_{M,n}^{(-j)}(\bX) \overset{p}{\longrightarrow} 0.
    \end{align*}
    According to Specification \ref{Spec:truncated}, $\tau_{M,n}(\bX) - \theta_{M,n}^{(-j)}(\bX)$ is bounded, and therefore convergence in probability implies $\mathbb{L}^2$-convergence, i.e.,
    \begin{align} \label{eq:L2_convergence}
        \E[(\tau_{M,n}(\bX) - \theta_{M,n}^{(-j)}(\bX))^2] \longrightarrow 0.
    \end{align}
    
    Next, recall that
    \begin{align*}
        \VIhat = \frac{\sum_{i = 1}^n \big[\tau_{M,n}(\bX'_i) - \theta_{M,n}^{(-j)}(\bX_i')\big]^2}{\sum_{i = 1}^n \big[\tau_{M,n}(\bX'_i) - \overline{\tau_{M,n}} \big]^2} - \mathrm{I}_n^{(0)}.
    \end{align*}
    We first consider
    \begin{align*}
        \Delta_{n,1} = \frac{1}{n} \sum_{i = 1}^n \big[\tau_{M,n}(\bX'_i) - \theta_{M,n}^{(-j)}(\bX_i')\big]^2,
    \end{align*}
    and then
    \begin{align*}
        \E[\Delta_{n,1}] = \E\big[\big(\tau_{M,n}(\bX'_1) - \theta_{M,n}^{(-j)}(\bX_1')\big)^2 \big].
    \end{align*}
    Since $|\Delta_{n,1}| = \Delta_{n,1}$, according to equation (\ref{eq:L2_convergence}), we have 
    \begin{align*}
        \E[|\Delta_{n,1}|] \longrightarrow 0,
    \end{align*}
    which also implies the convergence in probability of $\Delta_{n,1}$.
    
    Similarly for the denominator, we write
    \begin{align*}
        \Delta_{n,2} = \frac{1}{n} \sum_{i = 1}^n \tau_{M,n}(\bX'_i)^2 - \overline{\tau_{M,n}}^2 
    \end{align*}
    We first show the convergence of $\overline{\tau_{M,n}}$. Hence,
    \begin{align*}
        \E[\overline{\tau_{M,n}}] = \E[\frac{1}{n} \sum_{i = 1}^n \tau_{M,n}(\bX'_i)] = \E[\tau_{M,n}(\bX)] \longrightarrow \E[\tau(\bX^{(\bC)})],
    \end{align*}
    where the limit is obtained because Theorem \ref{thm:cf_consistency} gives the weak consistency of $\tau_{M,n}(\bX)$, which implies the convergence of the first moment since $\tau_{M,n}(\bX)$ is bounded from Specification \ref{Spec:truncated}.
    Next, we show that the variance of $\overline{\tau_{M,n}}$ vanishes. We use the law of total variance to get
    \begin{align*}
        \V[\overline{\tau_{M,n}}] = \V[\E[\overline{\tau_{M,n}} \mid \bTheta_M, \Dn]] + \E[\V[\overline{\tau_{M,n}} \mid \bTheta_M, \Dn]].
    \end{align*}
    For $\E[\V[\overline{\tau_{M,n}} \mid \bTheta_M, \Dn]]$, notice that $\tau_{M,n}(\bX'_i)$ are iid conditional on $\bTheta_M$ and $\Dn$. Therefore, 
    \begin{align*}
        \V[\overline{\tau_{M,n}} \mid \bTheta_M, \Dn] = \frac{\V[\tau_{M,n}(\bX) \mid \bTheta_M, \Dn]}{n} < \frac{K^2}{n},
    \end{align*}
    since $\tau_{M,n}(\bX)$ is bounded by $K$ from Specification \ref{Spec:truncated}. We thus obtain $\E[\V[\overline{\tau_{M,n}} \mid \bTheta_M, \Dn]] \longrightarrow 0$.
    For the first term, notice that
    \begin{align*}
        \V[\E[\overline{\tau_{M,n}} \mid \bTheta_M, \Dn]] = \V[\E[\tau_{M,n}(\bX) \mid \bTheta_M, \Dn]] < \V[\tau_{M,n}(\bX)],
    \end{align*}
    where this upper bound converges to $0$, since $\tau_{M,n}(\bX)$ converges towards $\tau(\bX^{(\bC)})$ in $\mathbb{L}^2$. 
    Overall, $\overline{\tau_{M,n}}$ is asymptotically unbiased and its variance vanishes, and therefore converges towards $0$ in $\mathbb{L}^2$, and the weak consistency follows, i.e.,
    \begin{align*}
        \overline{\tau_{M,n}} \overset{p}{\longrightarrow} \E[\tau(\bX^{(\bC)})].
    \end{align*}
    Using the continuous mapping theorem, we conduct the same analysis to get
    that $\frac{1}{n} \sum_{i = 1}^n \tau_{M,n}(\bX'_i)^2 \overset{p}{\longrightarrow} \E[\tau(\bX^{(\bC)})^2]$, and then
    \begin{align*}
        \Delta_{n,2} \overset{p}{\longrightarrow} \V[\tau(\bX^{(\bC)})],
    \end{align*}    
    with $\V[\tau(\bX^{(\bC)})] > 0$ from Assumption \ref{A:tau}.
    Finally, both the numerator $\Delta_{n,1}$ and denominator $\Delta_{n,2}$ of $\smash{\VIhat}$ converge in probability, and we can apply Slutsky's Lemma to obtain
    \begin{align*}
        \VIhat + \mathrm{I}_n^{(0)} \overset{p}{\longrightarrow} 0,
    \end{align*}
    and following the same arguments, we get that $\mathrm{I}_n^{(0)} \overset{p}{\longrightarrow} 0$, which gives the final result.
    The proof is similar for the case where $j \notin \bC$.
\end{proof}

\begin{proof}[Proof of Theorem \ref{thm:vimp_consistency_bias}]
    We can directly deduce from the proof of Theorem \ref{thm:cf_consistency_corrected} that, for $\bx \in (0,1)$,
    \begin{align*}
        \tau_{M,n}^{(-j)}(\bx) \overset{p}{\longrightarrow}
        \E[\tau(\bX^{(\bC)}) \mid \bX^{(-j)} = \bx^{(-j)}] + \frac{\mathrm{Cov}[\tau(\bX^{(\bC)}), \pi(\bX)(1 - \pi(\bX)) \mid \bX^{(-j)} = \bx^{(-j)}]}{\V[W - \pi(\bX) \mid \bX^{(-j)} = \bx^{(-j)}]}.
    \end{align*}
    We denote by $C_j(\bx^{(-j)})$ the second term of the above limit to lighten notations.
    Next, we follow the proof of Theorem \ref{thm:vimp_consistency} to get the convergence of $\mathcal{I}_n^{(j)}$, given by
    \begin{align*}
        \mathcal{I}_n^{(j)} \overset{p}{\longrightarrow} 
        \frac{\E[(\tau(\bX^{(\bC)}) - \E[\tau(\bX^{(\bC)}) \mid \bX^{(-j)}] - C_j(\bX^{(-j)}))^2]}{\V[\tau(\bX^{(\bC)}]}.
    \end{align*}
    The numerator writes
    \begin{align*}
        \E[(\tau(\bX^{(\bC)}) &- \E[\tau(\bX^{(\bC)}) \mid \bX^{(-j)}] - C_j(\bX^{(-j)}))^2] \\
        = &\E[\E[(\tau(\bX^{(\bC)}) - \E[\tau(\bX^{(\bC)}) \mid \bX^{(-j)}] - C_j(\bX^{(-j)}))^2 \mid \bX^{(-j)}]] \\
        = &\E[(\tau(\bX^{(\bC)}) - \E[\tau(\bX^{(\bC)}) \mid \bX^{(-j)}])^2 + C_j(\bX^{(-j)})^2] \\ &- 2 \E[ \E[\tau(\bX^{(\bC)}) - \E[\tau(\bX^{(\bC)}) \mid \bX^{(-j)}] \mid \bX^{(-j)}] \E[C_j(\bX^{(-j)}))^2 \mid \bX^{(-j)}] ] \\
        = &\E[(\tau(\bX^{(\bC)}) - \E[\tau(\bX^{(\bC)}) \mid \bX^{(-j)}])^2] + \E[C_j(\bX^{(-j)})^2].
    \end{align*}
    Then, we have
    \begin{align*}
        \mathcal{I}_n^{(j)} \overset{p}{\longrightarrow} \mathrm{I}^{(j)} + 
        \frac{\E[C_j(\bX^{(-j)})^2]}{\V[\tau(\bX^{(\bC)}]},
    \end{align*}
    which gives the final result.
\end{proof}

\end{document}